\newtheorem{theorem}{Theorem}[section]
\newtheorem{lemma}[theorem]{Lemma}
\DeclareMathOperator*{\argmin}{arg\,min}
\newcounter{phase}[algorithm]
\newlength{\phaserulewidth}
\newcommand{\setphaserulewidth}{\setlength{\phaserulewidth}}
\newcommand{\phase}[1]{%
  \Statex\leavevmode\llap{\rule{\dimexpr\labelwidth+\labelsep}{\phaserulewidth}}\rule{\linewidth}{\phaserulewidth}
  \Statex\strut\refstepcounter{phase}\textit{State~\thephase~--~#1}
  \vspace{-5pt}
  \Statex\leavevmode\llap{\rule{\dimexpr\labelwidth+\labelsep}{\phaserulewidth}}\rule{\linewidth}{\phaserulewidth}}
\newcommand{\phasebegin}[1]{%
  \Statex\strut\refstepcounter{phase}\textit{State~\thephase~--~#1}
  \Statex\leavevmode\llap{\rule{\dimexpr\labelwidth+\labelsep}{\phaserulewidth}}\rule{\linewidth}{\phaserulewidth}}
\begin{document}

\def\*#1{\mathbf{#1}}
\def\b#1{\boldsymbol{#1}}

\newcommand{\specialcell}[2][c]{%
  \begin{tabular}[#1]{@{}c@{}}#2\end{tabular}}
\newcolumntype{H}{>{\setbox0=\hbox\bgroup}c<{\egroup}@{}}
\newcommand{\norm}[1]{\left\lVert#1\right\rVert}
\begin{frontmatter}

\title{Reactive Soft Prototype Computing\\ for Concept Drift Streams\footnote{The final authenticated publication is available at \newline \url{https://doi.org/10.1016/j.neucom.2019.11.111}}}

\author[label1]{Christoph Raab\corref{cor1}}
\ead{christoph.raab@fhws.de}

\author[label1]{Moritz Heusinger}
\ead{moritz.heusinger@fhws.de}

\author[label1]{Frank-Michael Schleif}
\ead{frank.schleif@fhws.de}

\address[label1]{University for Applied Sciences Würzburg-Schweinfurt, Sanderheinrichsleitenweg 20, Würzburg, Germany}
\cortext[cor1]{I am corresponding author}

\begin{abstract}
    The amount of real-time communication between agents in an information system has increased rapidly since the beginning of the decade. This is because the use of these systems, e.g.\, social media, has become commonplace in today's society. This requires analytical algorithms to learn and predict this stream of information in real-time.
    The nature of these systems is non-static and can be explained, among other things, by the fast pace of trends.
    This creates an environment in which algorithms must recognize changes and adapt.
    Recent work shows vital research in the field, but mainly lack stable performance during model adaptation. In this work, a concept drift detection strategy followed by a prototype-based adaptation strategy is proposed. Validated through experimental results on a variety of typical non-static data, our solution provides stable and quick adjustments in times of change.

\end{abstract}

\begin{keyword}
Stream Classification, Concept Drift, Robust Soft Learning Vector Quantization, Kolmogorov-Smirnov, RMSprop, Momentum-Based Gradient Descent, Prototype Adaptation
\end{keyword}

\end{frontmatter}



\section{Introduction}
Recent years demonstrated a rapidly increasing amount of data generated by technologies like social media or sensor data. In particular, data is streamed and exceeds the memory and processing capabilities of analyzing systems by far. Hence, streaming algorithms are designed to process data as fast as they arrive, online and without storing large portions of data in the main memory.
In a supervised setting, streams are often affected by a change of underlying class distributions known as concept drift \cite{esann18_raab}. This results in drops of the prediction performance of prior models, making them unusable. The detection and handling of these events is one key area in the field of streaming research. Drifts appear differently through speed, intensity, and frequency, i.e.\, incremental, abrupt, gradual or reoccurring shown in Fig. \ref{fig:drift_types}.

The stability-plasticity dilemma \cite{Gama2014} defines the trade-off between incorporating new knowledge into models (plasticity) and preserve prior knowledge (stability). This prevents stable performance over time because on the edge of a drift, significant efforts going into learning and testing against new distributions.

The main contribution of this work is a concept drift streaming algorithm able to maintain stability during drift while learning new concepts.
The Robust Soft Learning Vector Quantization (RSLVQ) \cite{Seo2003}, recently considered as stream classifier \cite{Straat2018}, is enhanced by a prototype adaptation technique and combined with the Kolmogorov-Smirnov (KS)-Test for concept drift detection and referred as Reactive Robust Soft Learning Vector Quantization (RRSLVQ). The RRSLVQ is tested against standard concept drift stream classifiers on benchmark streams and showed stability during the drift, while quickly learning new concepts. Further, frequent reoccurring concept drift, which has not yet been considered in the literature, is introduced in Sec. \ref{sec:concept_drift} and integrated into the study.

The remaining structure of the paper is given in the following.
We discuss prior work on modifications of LVQ-prototypes and concept drift detectors in Sec. \ref{sec:related_work}.
In Sec. \ref{sec:preliminaries}, stream classification and concept drift are introduced.
The RRSLVQ as the main contribution is discussed in Sec. \ref{sec:rrslvq} and is divided into two parts. The concept drift detector is described in Sec. \ref{sec:concept_drift_detector}. The prototype adaptation strategy is discussed in Sec. \ref{sec:nsm} and its properties are shown in \ref{sec:properties_adaption}. An experimental study in Sec. \ref{sec:experiments} validates the approach.

\section{Related Work}\label{sec:related_work}

Concept drift detectors are trying to detect a change in streams either by monitoring the distribution of the streams or the performance of a classifier with respect to some benchmark, e.g.\, accuracy.
A popular approach for monitoring the prediction accuracy of a classifier is the Adaptive Sliding Window (Adwin)\cite{Bifet2006} and it assumes that if a change in performance is detected, the concept has changed. Adwin identifies changes in distributions using a window $W$ and splits $W$ into two adaptive subwindows and compares underlying statistics. The main window grows as there is no change detected and shrinks if a change between statistics of subwindows is detected. The change is recognized via Hoeffding Bound \cite{Bifet2006}.

Other methods similar to our approach identify drift not only but mainly via statistical tests on distributions of streams.
In \cite{DosReis2016}, two windows are maintained by a randomized search tree, which keeps recent data and the last concept. The KS-Test detects concept drift between windows. The approach simplifies the KS-Test similar to our approach. However, it requires a table of critical values for application. Further research is done in \cite{Salperwyck2015}, where one window stores a snapshot of the concept since last drift and another store a recently surveyed concept.
This detector is supervised and is not able to detect concept drift independent of conditional class probabilities.

Concept drift handling techniques can be roughly divided into active and passive \cite{Losing2017a}. The passive ones have no specific detection strategy, hence continually updating the model without awareness of concept drift. Active adaptation changes a model noticeable. By means of Learning Vector Quantization (LVQ), prototype adaptation or insertion is common \cite{Climer2016a}.

The family of LVQ algorithms is a learning scheme of prototypes representing class regions \cite{Nova2014}. The prototypes have a geometric representation and provide a simple interpretation and classification scheme. Within the learning process, the prototypes are attracted and repelled depending on the class of given samples minimizing the error function. Note that the first version of LVQ is a heuristic-based approach. However, through recent developments in the field, the Generalized-LVQ (GLVQ) \cite{Sato1995} or probabilistic approaches like RSLVQ \cite{Seo2003} are standard \cite{Villmann2017}.

In \cite{Zell1993}, a new prototype is inserted as the mean of a set of misclassified examples. Losing et al. \cite{Losing2015} select one sample from given samples as new prototypes, which minimizes the error. A near-mean technique is proposed by Climer et al. \cite{Climer2016a}. It uses a specific sample as a new prototype, which has the smallest Euclidean distance to the mean of a set of misclassified examples.
Besides great results in the respective domains, they share the problem of assuming the existence of all classes in a given batch of samples or apply asymmetric prototype insertion. Due to the composition of streams, with potentially unbalanced classes \cite{Bifet2015}, this assumption is not guaranteed. In this work, we propose a prototype adaptation strategy that can cope with missing classes.

\section{Preliminaries}\label{sec:preliminaries}
\subsection{Stream Classification}
In supervised classification a stream with potentially infinite length is a sequence $S=\{s_1,\dots,s_t,\dots\}$ of tuples $s_i=\{\mathbf{x_i},y_i\}$, arriving one $s_t$ at time $t$. A stream classifier predicts labels $y_t\in\{1,\dots,C\}$ of unseen data $\mathbf{x}_t \in \mathbb{R}^d$ by prior model $\hat y=h_{t-1}(\mathbf{x}_t)$ and includes this tuple into the model afterwards $h_{t}=learn(h_{t-1},s_t)$. This requires algorithms to predict and learn at any time.

Further, a stream classifier must cope with non-stationary environments, which change through the occurrence of concept drift.
\subsection{Concept Drift}\label{sec:concept_drift}
Concept drift is the change of joint distributions of a set of samples and corresponding labels between two points in time by
\begin{equation}\label{eq:concept_drift}
  \exists \mathbf{X} : p_{t-1}(\mathbf{X},y) \neq  p_t(\mathbf{X},y) .
\end{equation}
There are a variety of different drift types occurring in streaming data and causes the concept to change from one time step to another one. Fig. \ref{fig:drift_types} gives a short overview of drift types \cite{Gama2014}. Every sub-fgure shows a particular drift given as data mean. The shapes mark the dominating concept at a given time step. The vertical axis shows the overall data mean and the transition from one to another concept.
The goal of detectors is to identify any change as soon as possible, as in Eq. \eqref{eq:concept_drift}.
Therefore a detector should monitor the data distribution rather than performance values.

Note that we can rewrite Eq. \eqref{eq:concept_drift} to
\begin{equation}
    p_{t-1}(y|\*X) p_{t-1}(\*X) \neq p_t(y|\*X)p_t(\*X).
\end{equation}
If only the prior distribution $p(\mathbf{X})$ changes for two points in time, then it is called virtual drift. We assume that a change in this prior distribution is always present at real concept drift \cite{Gama2014}. Therefore, we assume with the proposed detector to identify every concept drift through monitoring $p(\*X)$. This is also an essential assumption for most statistical tests since they do not observe the conditional class probability but the prior distribution.

The frequent reoccurring drift is shown in the Fig. \ref{fig:freq_reoccurring_drift} and has not yet been considered in the literature but is particularly impressive. Stream classification algorithms are usually tested on streams with a low number of drifts per setting \cite{Gomes2017}. However, in real-world settings like robotics or autonomous driving, the concept can change very often due to changing external conditions like lighting or weather \cite{Gama.04,Wankhade.2013}. Therefore, we also use stream generators with frequent reoccurring drifts.
\begin{figure}
    \centering     
    \begin{subfigure}[]{.49\linewidth}
    \includegraphics[width=1\textwidth]{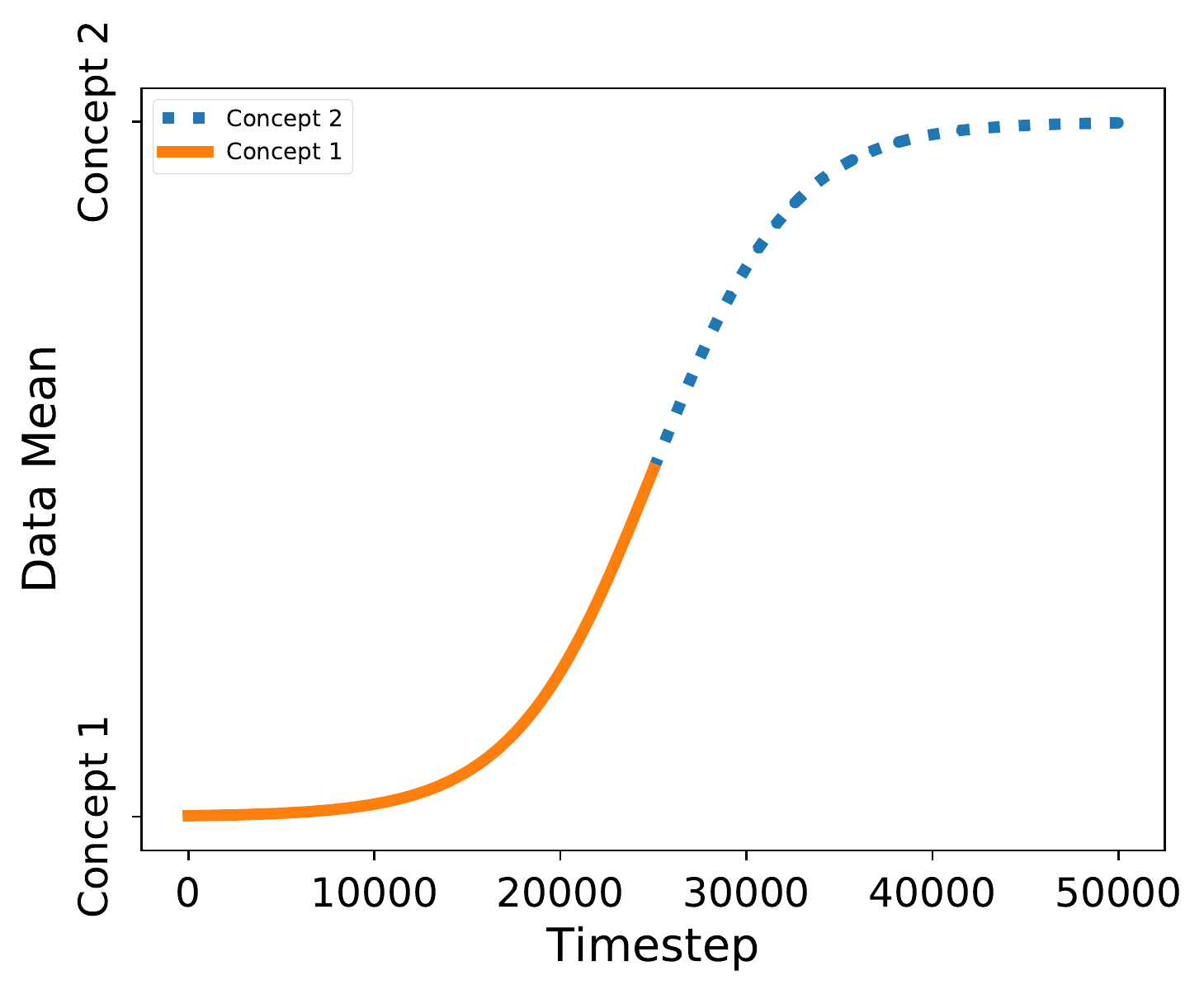}
    \subcaption{Incremental Drift \label{fig:incremental_drift}}
    \end{subfigure}
    \begin{subfigure}[]{.49\linewidth}
    \includegraphics[width=1\textwidth]{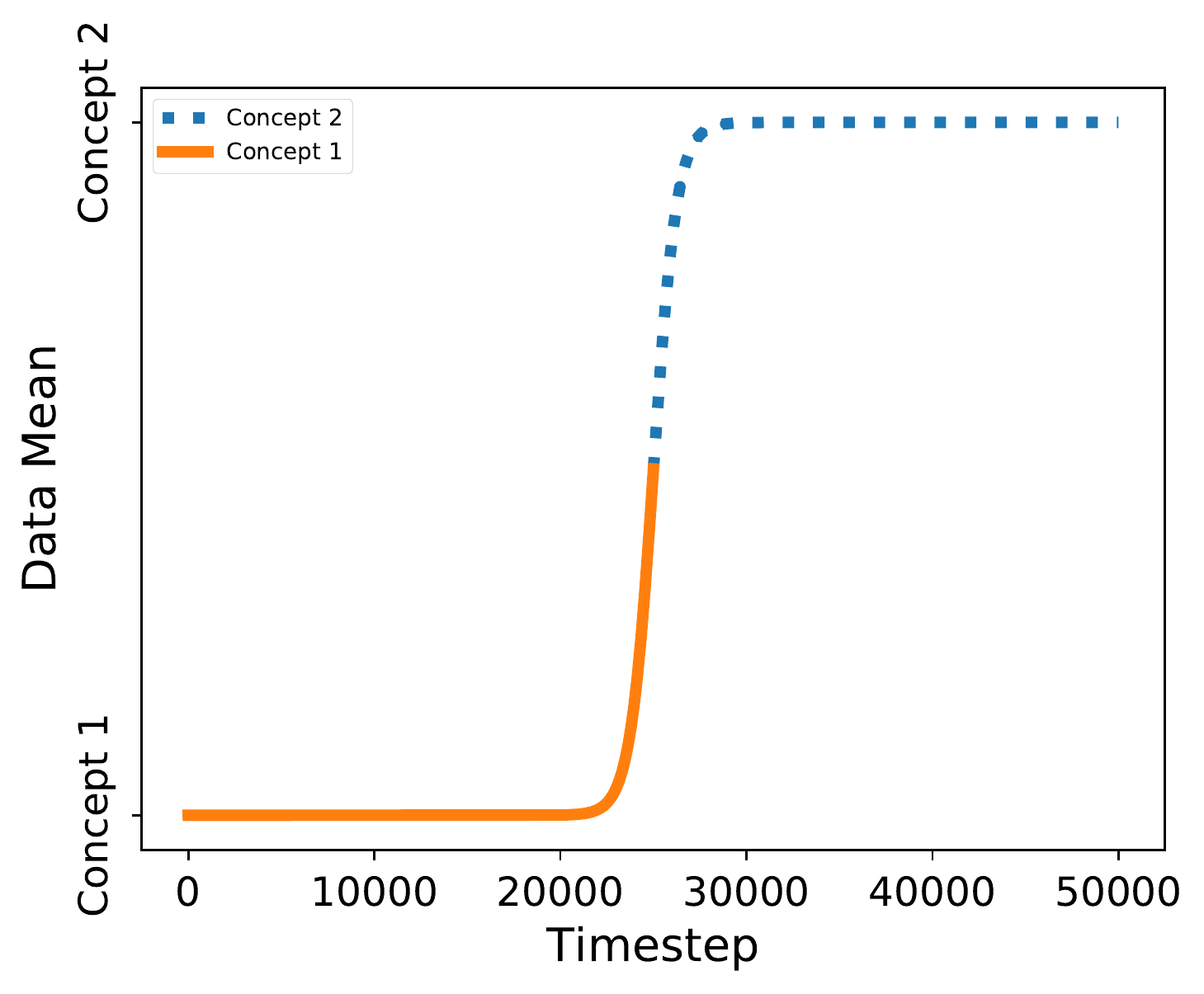}
    \subcaption{Abrupt Drift \label{fig:abrupt_drift}}
    \end{subfigure}
    \begin{subfigure}[]{.49\linewidth}
    \includegraphics[width=1\textwidth]{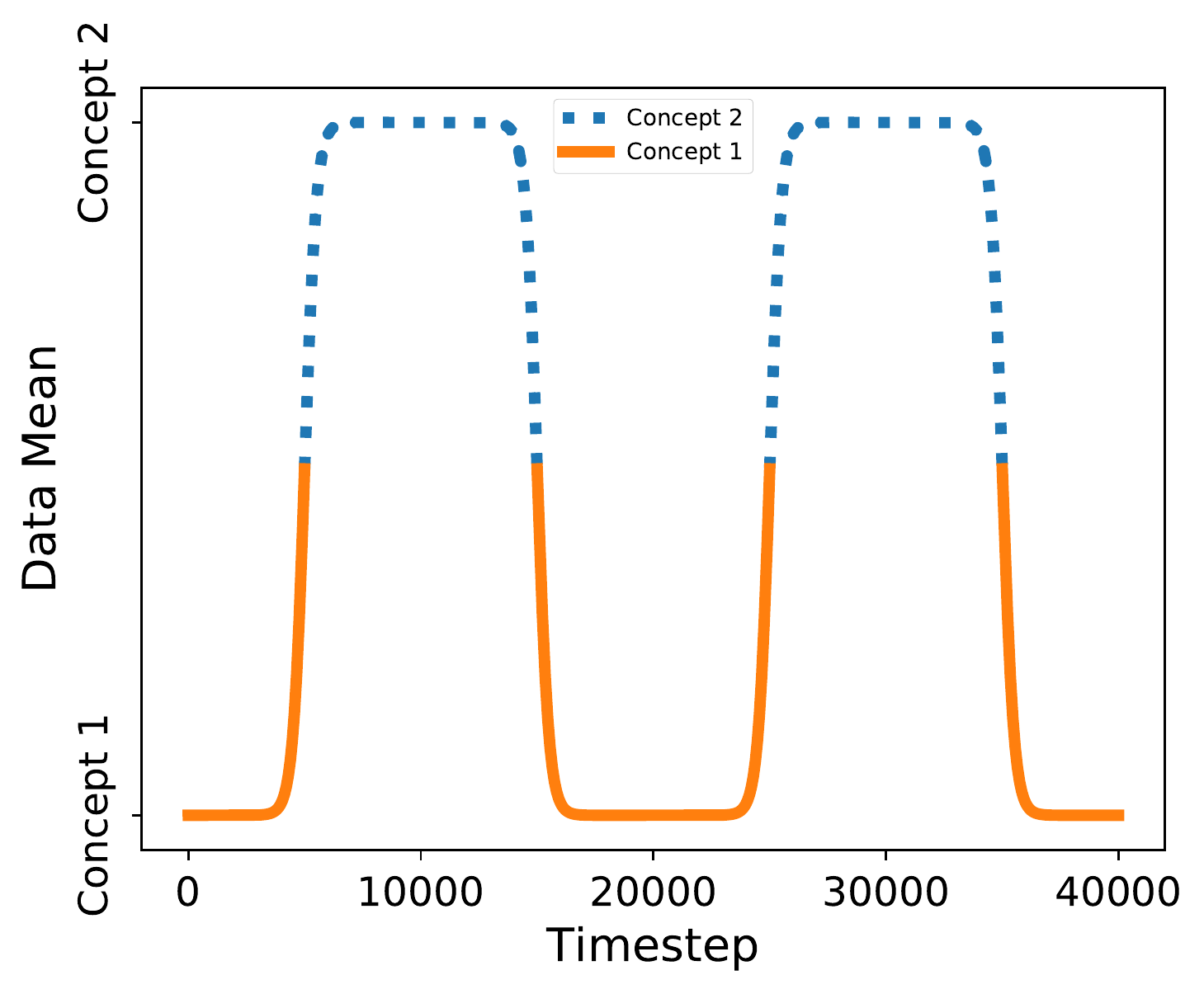}
    \subcaption{Reoccurring Drift }\label{fig:reoccurring_drift}
    \end{subfigure}
    \begin{subfigure}[]{.49\linewidth}
    \includegraphics[width=1\textwidth]{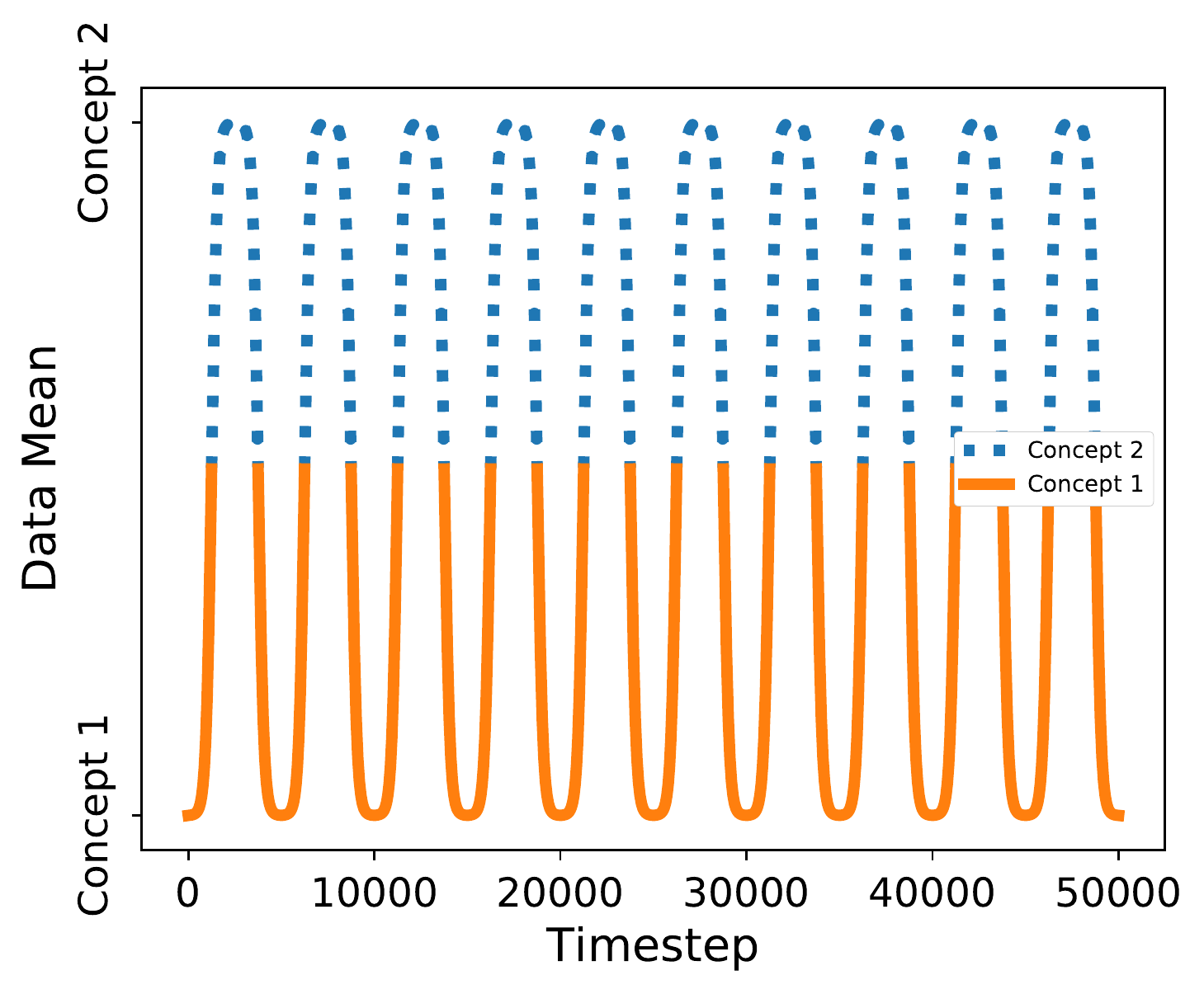}
    \subcaption{Frequent Reoccurring} \label{fig:freq_reoccurring_drift}
    \end{subfigure}
    \begin{subfigure}[]{.49\linewidth}
    \includegraphics[width=1\textwidth]{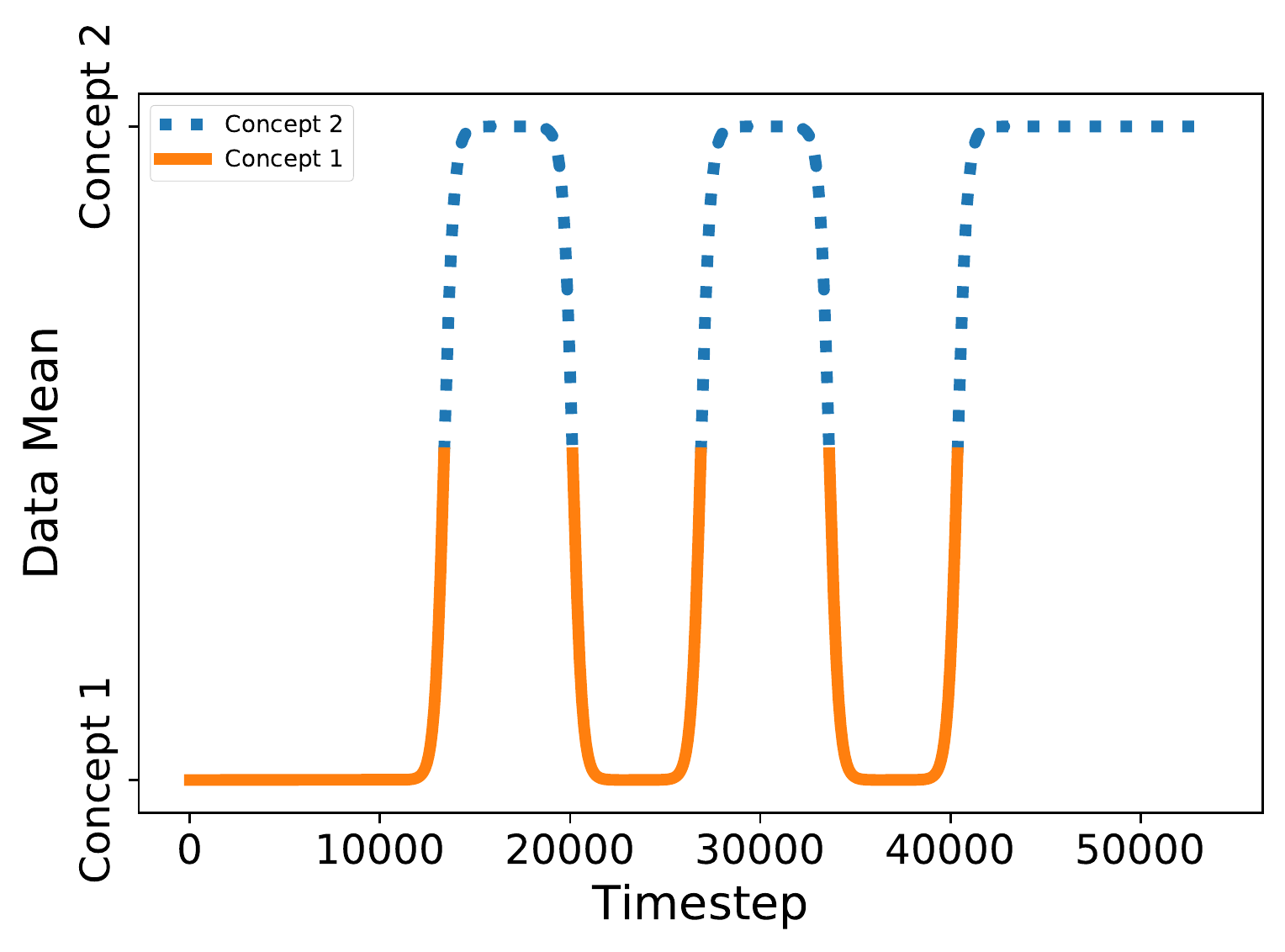}
    \subcaption{Gradual Drift \label{fig:gradal_drift}}
    \end{subfigure}
    \caption{Different types of drifts, one per sub-figure and illustrated as data mean. The colors mark the dominate concept at given time step. The vertical axis shows the data mean and the transition from one to another concept. Given the time axis the speed of the transition is given, which reaches from very slow at incremental to very fast at frequent reoccurring drift. The figures are inspired by \cite{Gama2014}. \label{fig:drift_types}}
\end{figure}
\section{Reactive Robust Soft Learning Vector Quantization}\label{sec:rrslvq}
The section details the main contribution and contains two main parts. The first in Sec. \ref{sec:concept_drift_detector} details the concept drift detector and the second part introduces the RSLVQ (Sec. \ref{sec:rslvq}), the proposed prototype adaptation strategy (Sec. \ref{sec:nsm}) and the properties of the adaptation (Sec. \ref{sec:properties_adaption}).
\subsection{Concept Drift Detection}\label{sec:concept_drift_detector}
Before applying a drift detector, a memory strategy must be defined. The sliding window $\Psi$ keeps $n$ recent points from the stream. It pushes incoming data to the top and removing the oldest one from the bottom. We use the Kolmogorov-Smirnov test as a concept drift detector, which expects two data windows. Therefore, we create two windows out of the sliding window $\Psi$. The first windows $R = \{\mathbf{x}_i \in \Psi\}_{i=n-r+1}^n$ has the most recent data points from $\Psi$. We define most recent as the $r$ newest arrived data points. The second window $W$ is created by sampling uniformly from the non-recent part of $\Psi$ by
\begin{equation}\label{eq:kswin_window}
    W = \{\mathbf{x}_i\in \Psi | i < n-r+1 \land p(\mathbf{x})= \mathcal{U}(\mathbf{x}_i|1, n-r)\}
\end{equation}
with $|W|=|R|= r $ and  $\mathcal{U}(\mathbf{x}_i|1, n-r) = \frac{1}{n-r}$ is the uniform distribution. By this, we do not make any assumptions of distribution but assume to represent the concept of the non-recent data. The memory strategy with the sliding window is summarized in Fig.  \ref{fig:memmory_strategy}. For subsequent adaptation steps, we also store the label $y_i$ to a given sample $\*x_i$.

\begin{figure}[]
    \centering
    \includegraphics[width=1\textwidth]{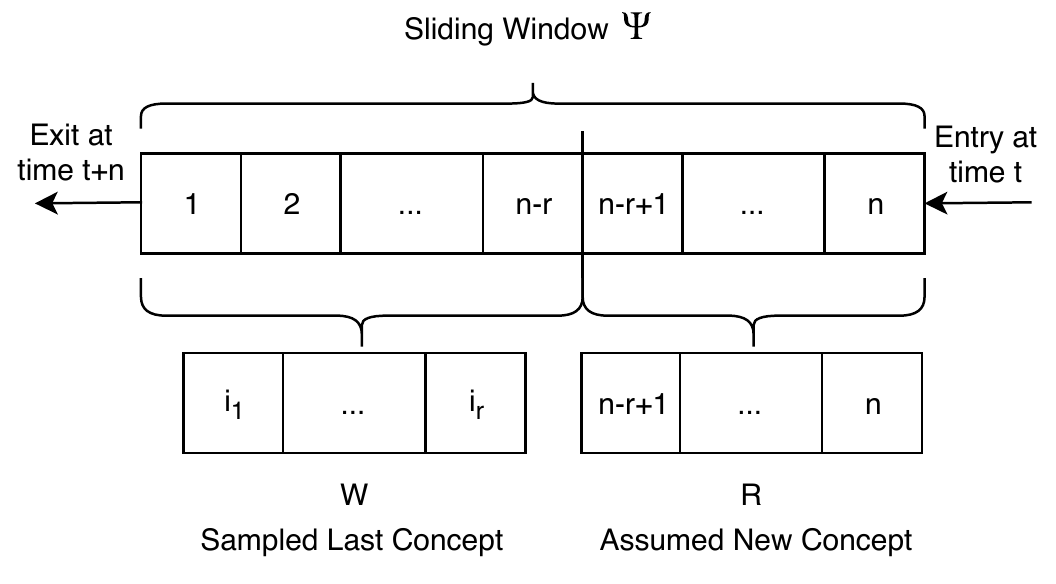}
    \caption{The memory strategy within the concept drift detector. It stores $n$ samples in a sliding window. At every time step, $r$ samples are picked uniform and are compared against the newest $r$ samples of the window.}
    \label{fig:memmory_strategy}
\end{figure}

By using the above scheme, there is a limitation in concept drift detection regarding some streams. The worst case is to take a sample from a recurring stream at certain intervals and always get the same concept of the changing stream. Thus the distribution seems to be the same from window $R$ and the samples $W$, because the detector always picks at the same rate of concept change. However, there is clearly a concept drift ongoing. This is due to the large window and the Uniform picking. Other probability distributions for the sampling scheme should lead to a different result, but with familiar problems and some concept drifts will be missed, because the sampling procedure is an approximation of the window $\Psi$. We choose the Uniform distribution because it has no parameters and is fair in sampling from unknown distributions.

The Kolmogorov-Smirnov \cite{Lopes2011} test is a non-parametric test accepting one-dimensional data without any assumptions of underlying distributions. It compares the absolute distance $dist_{w,r}$ between two empirical cumulative distributions $F_W$ and $F_R$ by
\begin{equation}\label{eq:empirical_distance}
    dist_{w,r} =\sup_{x} |F_{W}(x) - F_{R}(x)|,
\end{equation}
where $F_{(\cdot)}(x)=\frac{1}{n}\sum_{j=1}^{n} I_{[-\infty,x]}(X_j)$ and $I_{[-\infty,x]}(X_j)$ is an indicator function which is one if $X_i \geq x $ and zero otherwise. Note that $sup(x)$ is the smallest required $x$ for a condition to be valid. If this lower bound of maximum distance, i.e.\, $dist_{w,r}$, is greater than the test statistic, then the null hypothesis is rejected, with significance level $\alpha$. For two subwindows $W,R$ with the same size, the test is reduced to
\begin{eqnarray}\label{eq:kstest}
   dist_{w,r}  >  c(\alpha) \sqrt{\frac{n+r}{nr}}= \sqrt{-\frac{1}{2}\ln\alpha}\sqrt{\frac{n+r}{nr}} \stackrel{\text{(n=r)}}{=} \sqrt{-\frac{\ln\alpha}{r}},
\end{eqnarray}
where $\alpha$ is the confidence level, $r$ is the size of the window $R$ and $n$ is the size of the window $w$. $c(\alpha)$ is the critical value of the test with respect to the confidence level $\alpha$, which can either be looked up for standard test sizes or computed by $\sqrt{-\frac{1}{2}\ln\alpha}$.
Due to the restriction to one-dimensional distributions, the test in Eq. \eqref{eq:kstest} is applied to all dimensions, therefore at any point $t$, $d$ tests must be done due to $\mathbb{R}^d$. With this extension and based on Eq. \eqref{eq:kstest}, the following can be stated:
\begin{lemma}\label{le:ks_bound}
  Given $\mathbf{X}_{t}$ and $\mathbf{X}_{t-1}$ with $\mathbf{X}_k=\{\mathbf{x}_j\}_{j=1}^{n} \in \mathbb{R}^d$, $p(\mathbf{X})= \prod\limits_{i=1}^{d} p(\mathbf{x}^{(i)})$, $\forall \mathbf{x}^{(i)}$  $\sim i.i.d$ and their cumulative distributions $F_{t}$,$F_{t-1}$, KS-Test detects any change between $p(\mathbf{X}_{t})$ and $p(\mathbf{X}_{t-1})$, i.e.\, $\exists \mathbf{x}^{(i)} : p_{t}(\mathbf{x}^{(i)}) \neq p_{t-1}(\mathbf{x}^{(i)})$,  with probability $1-\alpha$, if the difference in empirical distribution is at least $\sqrt{\frac{-ln \alpha}{r}}$.
\end{lemma}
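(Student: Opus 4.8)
The plan is to reduce the $d$-dimensional statement to $d$ essentially independent applications of the one-dimensional two-sample Kolmogorov--Smirnov test, whose critical value has already been worked out in Eq.~\eqref{eq:kstest}, and then to patch the marginal conclusions back into a statement about the joint law using the product assumption $p(\mathbf{X})=\prod_{i=1}^d p(\mathbf{x}^{(i)})$.

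First I would fix a single coordinate $i$ and consider the null hypothesis $H_0^{(i)}: p_t(\mathbf{x}^{(i)}) = p_{t-1}(\mathbf{x}^{(i)})$. Because the $n$ samples in each of $\mathbf{X}_t,\mathbf{X}_{t-1}$ are i.i.d.\ in that coordinate, $F_t^{(i)},F_{t-1}^{(i)}$ are ordinary empirical distribution functions, and under $H_0^{(i)}$ the sup-distance statistic $dist_{w,r}$ of Eq.~\eqref{eq:empirical_distance} is governed (asymptotically in $r$) by the Kolmogorov distribution; using the same leading-constant simplification employed in the paper's own derivation gives $\Pr\!\left(dist_{w,r} > c(\alpha)\sqrt{(n+r)/(nr)}\right)\le\alpha$, which for $n=r$ is exactly $\Pr\!\left(dist_{w,r} > \sqrt{-\ln\alpha/r}\right)\le\alpha$ by Eq.~\eqref{eq:kstest}. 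Hence, if the observed empirical distance in coordinate $i$ satisfies $dist_{w,r}\ge\sqrt{-\ln\alpha/r}$, then rejecting $H_0^{(i)}$ --- i.e.\ declaring $p_t(\mathbf{x}^{(i)})\neq p_{t-1}(\mathbf{x}^{(i)})$ --- is correct with probability at least $1-\alpha$.

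Second I would lift this to the joint distribution. If every marginal were unchanged, the product representation would force $p_t(\mathbf{X})=p_{t-1}(\mathbf{X})$; contrapositively, $p_t(\mathbf{X})\neq p_{t-1}(\mathbf{X})$ implies $\exists i: p_t(\mathbf{x}^{(i)})\neq p_{t-1}(\mathbf{x}^{(i)})$. Thus applying the one-dimensional test of the previous step to all $d$ coordinates is \emph{complete}: any genuine drift manifests as a marginal change in at least one coordinate, and by the threshold argument that coordinate's test fires once its empirical distance reaches $\sqrt{-\ln\alpha/r}$, with the per-coordinate confidence $1-\alpha$. Combining the two steps yields the claim.

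The main obstacle --- and the point I would be most careful about --- is pinning down the precise meaning of the guarantee, since the lemma couples a deterministic trigger (``if the empirical distance is at least $\sqrt{-\ln\alpha/r}$'') with a probabilistic conclusion (``with probability $1-\alpha$''): one must state explicitly that $\sqrt{-\ln\alpha/r}$ is exactly the critical value at which the test's Type-I error equals $\alpha$, and that this identification rests on the asymptotic Kolmogorov law (equivalently a DKW-type tail bound) under the same constant simplification as in Eq.~\eqref{eq:kstest}. A secondary subtlety is multiple testing: with $d$ simultaneous coordinate tests the family-wise false-alarm probability is controlled only at roughly $d\alpha$, so the $1-\alpha$ in the statement should be read as the per-coordinate guarantee, and a Bonferroni/Šidák correction on $\alpha$ would be needed for a simultaneous-in-$d$ version.
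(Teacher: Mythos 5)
Your proposal is correct and follows essentially the same route the paper intends: the paper offers no explicit proof of Lemma~\ref{le:ks_bound}, presenting it as a direct consequence of the per-dimension application of the critical-value bound in Eq.~\eqref{eq:kstest}, which is exactly the two-step argument (one-dimensional KS critical value, then lifting via the product/independence assumption) that you reconstruct. Your closing caveat about the family-wise error over the $d$ coordinate tests is also consistent with the paper, which reads the $1-\alpha$ as a per-test guarantee and addresses the multiplicity separately via the Bonferroni--Dunn correction.
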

To implement Lemma \ref{le:ks_bound}, we set $\mathbf{X}_{t}=R$ and $\mathbf{X}_{t-1}=W$.
For a reasonable choice of window size $r$, the influence of changing the window size and the confidence level of the KS-Test is demonstrated in Fig. \ref{fig:parameter_sensitivity}. The left figure shows the influence of the window size and the right figure, the confidence level on the required distance. By decreasing the confidence level $\alpha$, $dist$ increases, and with increasing window size $r$, $dist$ decreases. Hence, they behave competitively w.r.t. $dist$ value. In a streaming context with potentially unlimited data, the KS-Test will be too sensitive, given a large window size,  with many false positives. This motivates the choice of a relatively small window size $r=30$, but still being statistically valid at the same time.
\begin{figure}[t!]
    \centering     
    \begin{subfigure}[]{.49\linewidth}
    \includegraphics[width=1\textwidth]{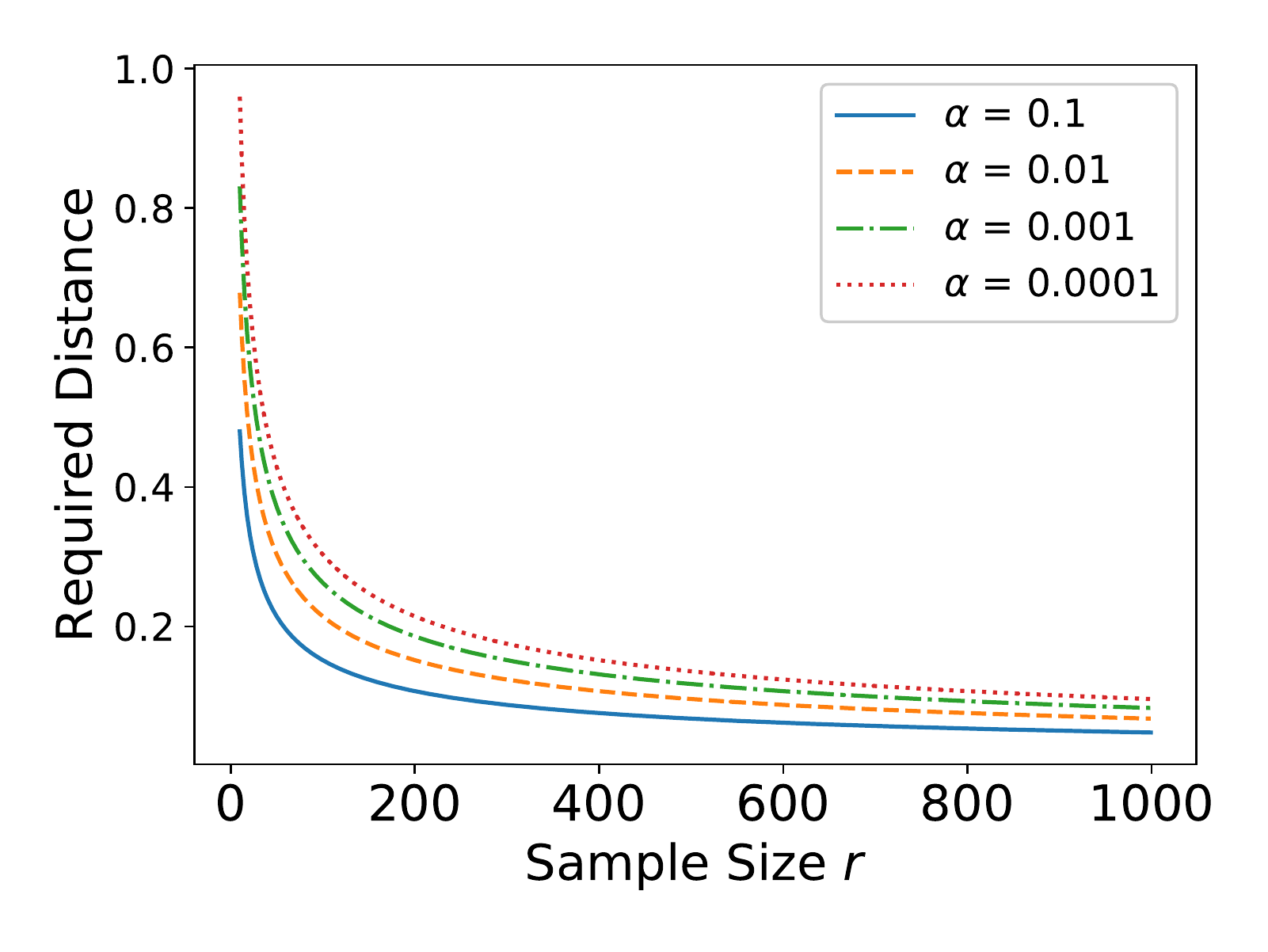}
    \subcaption{Window Size (Log scale)\label{fig:window_size}}

    \end{subfigure}
    \begin{subfigure}[]{.49\linewidth}
    \includegraphics[width=1\textwidth]{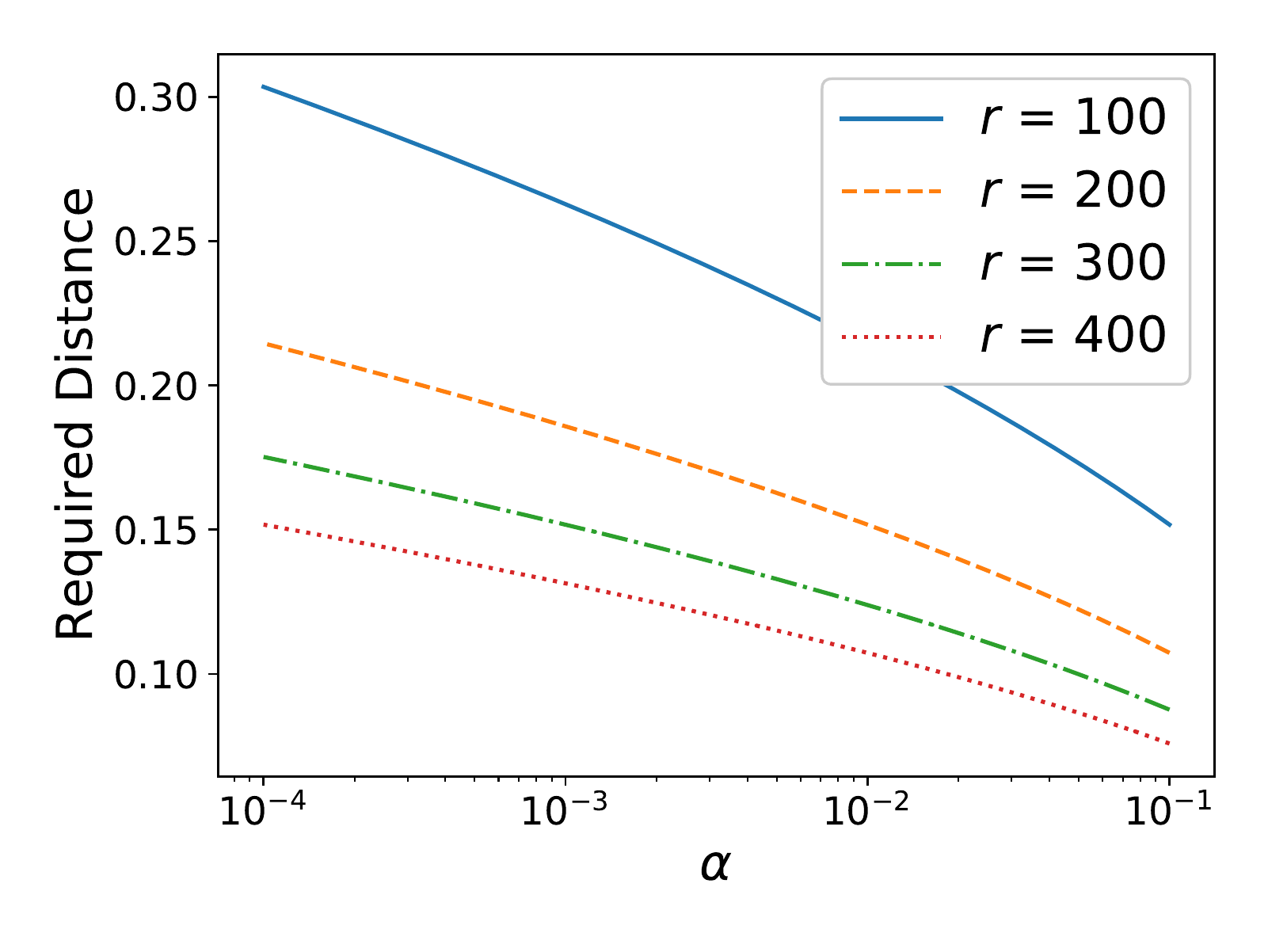}
    \subcaption{Confidence Level (Log Scale) \label{fig:confident_score}}
    \end{subfigure}
    \caption{Plot of the parameter sensitivity to the required distance of the KS-Test in log scale on the x-axes. The left figure shows the sample size and the right figure, shows the confidence level against the required distance needed for a concept drift detection. Both figures show sensitivity towards the window size and especially with a tiny window the test becomes more insensitive to small differences in distribution. \label{fig:parameter_sensitivity}}
\end{figure}

However, when it comes to many statistical tests, the problem with multiple hypothesis testing arises, with the consequence of false positives due to random chance.
By applying the Bonferroni-Dunn correction \cite{Abdi2007}, we reduce this effect by obtaining a new alpha with $\alpha^* = \lfloor \frac{\alpha}{r}\rfloor = \frac{0.05}{60} =0.0001$.
Because $\alpha^*$ affects Eq. \eqref{eq:kstest}, a small alpha increases the required distance.

Both, a small window size and the corrected alpha should avoid false positives. However, both actions do not entirely avoid false signals, but making the KS-Test more insensitive and combined with the proposed prototype adaptation strategy false positives are not critical. In the following, the KS-Test detector is called Kswin.

\subsection{Robust Soft Learning Vector Quantization}\label{sec:rslvq}
The Robust Soft Learning Vector Quantization \cite{Seo2003} is a probabilistic prototype based classification algorithm and capable of online learning.
Given a labeled dataset $ \mathbf{D} = \{ ( \mathbf{x_i},y_i) \in \mathbb{R}^d \times \{1,\dots,C\}\}_{i=1}^n $  as classification task. The RSLVQ assumes that $\mathbf{D}$ can be represented as class-dependent Gaussian mixture model and approximates this mixture by a set of $m$ prototypes $\Theta= \{ (\boldsymbol{\theta}_j,y_j) \in \mathbb{R}^d \times \{1,\dots,C\}\}_{j=1}^m$, where each prototype represents a multi-variate Gaussian model, i.e.\, $\mathcal{N}(\boldsymbol{\theta}_j,\Sigma)$ with $\Sigma = \*I \sigma$ and $\*I$ as identity matrix. Further, $\b\theta_j$ is a d-dimensional prototype representing the mean of a Gaussian mixture component with a variance of $\sigma$, which is assumed to be the same for every component.
The goal of RSLVQ algorithms is to learn prototypes representing the class-dependent distribution, i.e.\, a corresponding class sample $\mathbf{x}_i$ should be mapped to the correct class or Gaussian mixture based on the highest probability.

The RSLVQ \cite{Seo2003} algorithm minimizes the objective function
\begin{equation}\label{eq:objective_rslvq}
  \mathcal{L}  = \frac{1}{n} \sum\limits_{i=1}^n ls(\mathbf{x}_i,{y_i}|\Theta)\>\>\>\text{with} \>\>\>   ls(\mathbf{x}_i,{y_i}|\Theta)=  \frac{p(\mathbf{x}_i,\Bar{y_i}|\Theta)}{p(\mathbf{x}_i|\Theta)}.
\end{equation}
Where $p(\mathbf{x}_i,\Bar{y_i}|\Theta)$ is the probability density function that $\mathbf{x}_i$ is generated by the mixture model of any of the different classes and $p(\mathbf{x}_i|\Theta)$ is the overall probability density function of $\mathbf{x}_i$ given $\Theta$. In other words, $\Bar{y_i}$ is every label which is not the ground truth label $y_i$ of $x_i$. At time step $t$, these probabilities are computed by
\begin{equation}\label{eq:rslvq_lst}
    ls_t(\mathbf{x}_t,{y_t}|\Theta) = \frac{p(\mathbf{x}_t,\Bar{y_t}|\Theta)}{p(\mathbf{x}_t|\Theta)} = \sum_{j:c_j \neq y_t} P(j|\*x_t).
\end{equation}
Where $j$ is the $j$-th prototype in $\Theta$ and $c_j$ is the corresponding label. $P(j|\mathbf{x})$ is the probability that $\mathbf{x}$ is generated by the component $j$ with
\begin{equation}\label{eq:rlsq_probs}
    P(j|\*x) = \frac{p(\*x|j)P(j)}{p(\*x)} = \frac{exp\Big(-\frac{\norm{\*x - \b\theta^2_j}}{2\sigma^2} \Big)}{\sum_{k:c_k \neq c_j} exp\Big(-\frac{\norm{\*x - \b\theta^2_k}}{2\sigma^2} \Big)}.
\end{equation}
Based on this, the gradient \cite{Seo2003} for the prototype update step at time $t$ is computed by
\begin{equation}\label{eq:rslve_gradients}
    g_t = \begin{cases}
        - P(j|\*x_t)ls_t(\*x_t - \b\theta_t),   & if\>\>c_j = y_t,\\
        P(j|\*x_t)(1-ls_t)(\*x_t - \b\theta_t), & if\>\>c_j \neq y_t. \\
\end{cases}
\end{equation}
Note that $ls_t$ abbreviated for Eq. \eqref{eq:rslvq_lst}. The normalization term in Eq. \eqref{eq:objective_rslvq} is not computed at the update step and, therefore, the RSLVQ is feasible for potentially infinite streams.

The prototypes in each update step will be optimized with a momentum-based gradient technique designed for RSLVQ \cite{Heusinger} given with
\begin{equation}\label{eq:rslvq_update}
   \b\theta_{t+1} = \b\theta_{t} - \Delta\b\theta_{t} = \b\theta_t - \frac{\sqrt{E[\Delta\theta^2]_{t} + \epsilon}}{\sqrt{E[g^2]_{t} + \epsilon}}g_t.
\end{equation}
Where $\epsilon$ is a small positive value for numerical stability. The $E[g^2]_t$ are the stored past squared gradients as running mean
\begin{equation}\label{eq:update_gradients}
    E[g^2]_t = \gamma E[g^2]_{t-1} + (1 - \gamma)g_t^2
\end{equation}
and $E[\Delta\theta^2]$ are the past squared parameter updates as running mean
\begin{equation}\label{eq:update_parameters}
    E[\Delta\theta^2]_t = \gamma E[\Delta \theta^2]_{t-1} + (1 - \gamma) \Delta \theta_t^2.
\end{equation}
Both equations are controlled by the decay-factor $\gamma$ controlling the relevance of previous updates and the current ones.
As pointed out in \cite{Biehl.19}, a momentum-based gradient descent is a reliable strategy to handle concept drift with LVQ classifiers.
In a streaming setting at every time step, every prototype will be updated by Eq. \eqref{eq:rslvq_update} with a given tuple $s_i=\{\*x_i,y_i\}$, as shown in pseudocode \ref{PseudoCodeRSLVQ}.
The RSLVQ predicts a given sample point $\*x$ by selecting the nearest prototype $\b\theta_q$ according to the Gaussian kernel
\begin{equation}\label{eq:rslvq_prediction}
    q = \argmin_i exp \bigg(-\frac{\norm{\*x - \b\theta^2}}{2\sigma^2} \bigg),
\end{equation}
and assigning the corresponding label of $\b\theta_q$ to $\*x$. The $\sigma$ is the width of the Gaussian kernel and together with the number of prototypes, are the only tuneable parameters in the RSLVQ.
For a more comprehensive derivation of RSLVQ with momentum SGD see \cite{Seo2003,Heusinger}.

\subsection{Prototype Adaptation Strategy}\label{sec:nsm}
In this work, the adaptation strategy adapts actively to abrupt or gradual concept drift. We allow missing classes in $R$ since streams have unbalanced classes and class-wise adaptation is usually infeasible.
If a drift is detected as in Sec. \ref{sec:concept_drift_detector}, window $R$ represents a new context and $\Theta$ represents the last context. According to Kswin, the current set of prototypes $\Theta$ has approximated an out-dated distribution.

Hence, we propose the following two-step adaptation strategy.
The first step is to replace the complete set $\Theta$ of prototypes by $m$ new prototypes. Note that the classes remain the same.
A good starting point for replacement by means of approximating an unknown mixture model by Gaussian mixture is the mean of points\cite{Seo2003}, regardless of class affiliation. Therefore, the new prototypes become
\begin{equation}\label{eq:replacement}
     \boldsymbol{\theta}_{i}^{(new)} = \frac{1}{r}\sum_{\mathbf{x} \in R} \mathbf{x},\>\>\>\forall i=1,\dots,m.
\end{equation}
The next step is to train all prototypes as in Eq. \ref{eq:rslvq_update} on all $\{\*x_i,y_i\} \in R$.
This modifies prototypes and draws them to the same class points and pushes different class prototypes away.
In the case of missing classes, the algorithm pushes the prototypes away from points with a different class.

The process is shown in pseudocode \ref{PseudoCodeRSLVQ} in the \textit{State 3 - Learning Rate}. Further, we visualized the adaptation in Fig. \ref{fig:InsertSteps}. It shows the adaptation of two prototypes with different classes as mean in \ref{fig:replacement} and the optimization afterward in \ref{fig:optimization} in a two-dimensional feature space. Data is generated with MIXED generator stream from Scikit-Multiflow \cite{skmultiflow}.
Further, we show in Sec. \ref{sec:properties_adaption} that adaptation on $R$ leads to a lower error, if the distributions are different enough, rather then taking no action.
Therefore, false positives as described in Sec. \ref{sec:concept_drift_detector}, are not critical because every replacement always leads to lower error and does no harm.
\begin{figure}[t!]
    \centering     
    \begin{subfigure}[]{.49\linewidth}
    \includegraphics[width=1\textwidth]{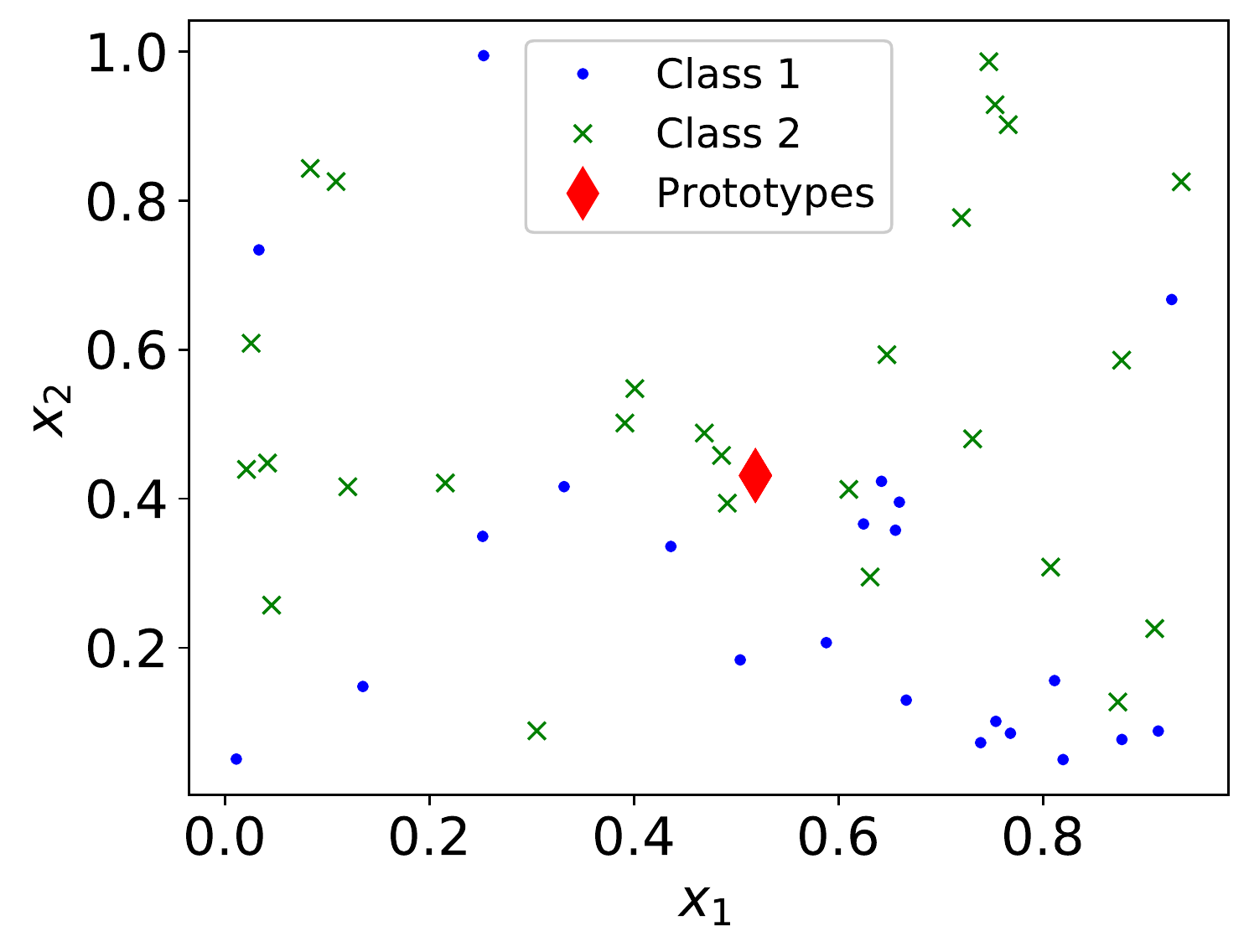}
    \subcaption{Insertion Step    \label{fig:replacement}}

    \end{subfigure}
    \begin{subfigure}[]{.49\linewidth}
    \includegraphics[width=1\textwidth]{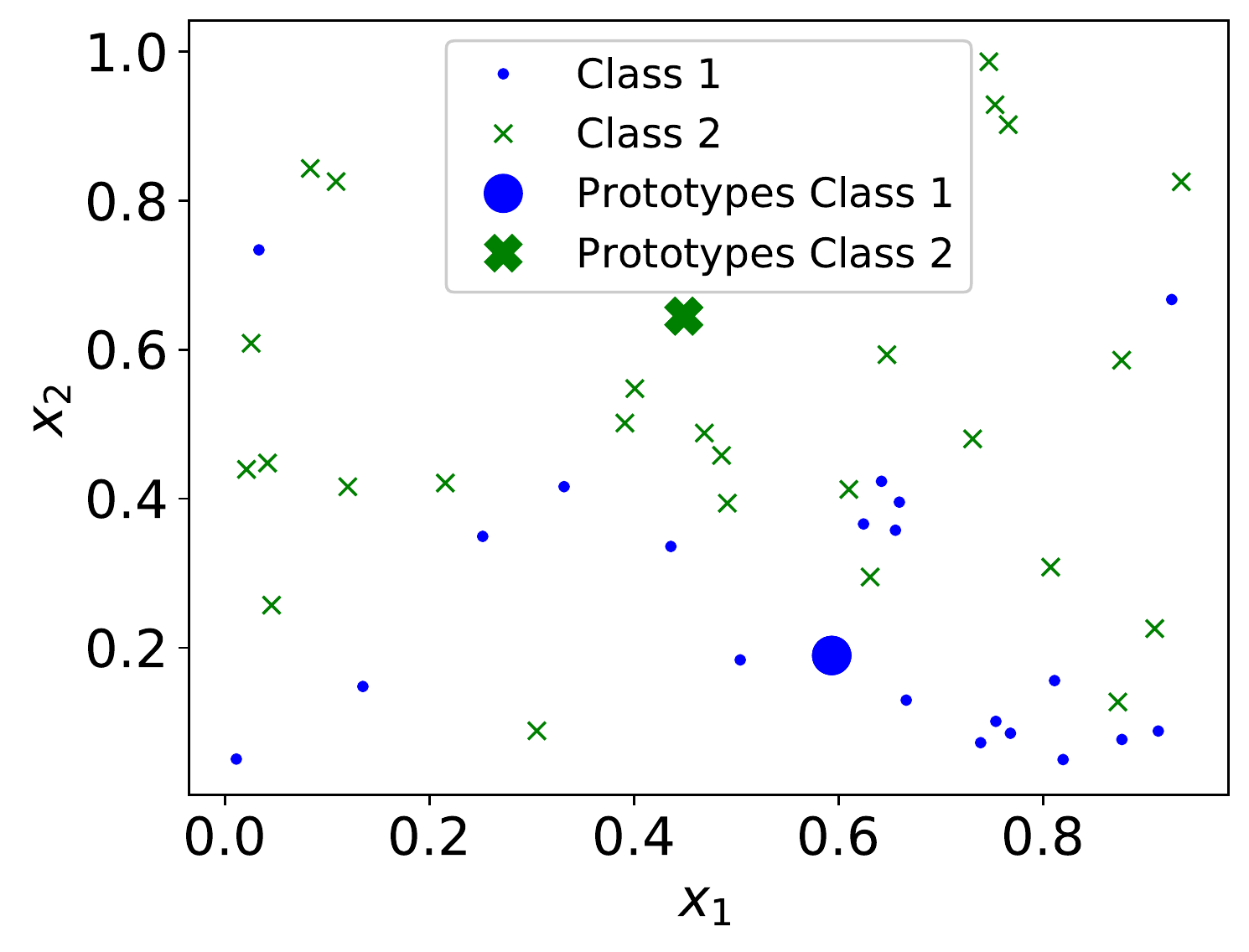}

    \subcaption{Optimization Step    \label{fig:optimization}}
    \end{subfigure}
    \caption{Process of  prototype adaptation strategy on given window $R$ with $|R|=50$ for more details. First, insertion of $m$ prototypes as mean over all samples in the left figure, marked as a red diamond. Second after optimization, where green/blue crosses/dots are representing class prototypes. \label{fig:InsertSteps}}
\end{figure}
\begin{algorithm}
    \caption{Reactive Robust Soft Learning Vector Quantization ({RRSLVQ})}\label{PseudoCodeRSLVQ}
    \begin{algorithmic}[1]
        \phasebegin{Initialization\vspace{-5pt}}
        \Require $m$ as number of prototypes; $\sigma$ as width of Gaussian kernel; $\gamma$ as decay-factor; $r$ as sampling size; $\alpha$ as confidence-level.
        \Ensure $h_0(\*x;\Theta,\gamma,\alpha$,$r$) as initialized model
        \State Create prototypes $\Theta = \{\b\theta_j\}_{j=1}^m \sim \mathcal{N}(\*0,\Sigma=\*I\sigma)$
        \phase{ Prediction}
        \Require $h_{t-1}(\*x)$ as RRSLVQ model; $\*x_t\in\mathbb{R}^d$ sample point arrived at time $t$.
        \Ensure Predicted label $\hat{y}_t$ of sample $\*x_t$.
        \State Compute nearest prototype $\b\theta_q$ to $\*x_t$ (Eq. \eqref{eq:rslvq_prediction})
        \State Assign predicted label $\hat{y}$ based on the neared prototype $\b\theta_q$
        \phase{Learning}
        \Require $h_{t-1}(\*x)$ as RRSLVQ model from previous time step; $\{\*x_t,y_t\}$ as labeled data at time step t.
        \Ensure $h_{t}(\*x)$ as updated model.
        \State Update sliding window (According to Fig. \ref{fig:memmory_strategy})
        \State Create sampling window $W$ and $R$(Eq. \ref{eq:kswin_window} and Fig. \ref{fig:memmory_strategy})
        \State Compute $d$ distribution differences $dist_{w,r}$ between $R,W \in R^d$ (Eq. \eqref{eq:empirical_distance})
        \If{ $\exists d_i > \sqrt{-\frac{\ln\alpha}{r}}$ (CD-Detection Eq. \eqref{eq:kstest})}
            \For{$\forall \*x_i,y_i \in R$}
                \State Step 12 to 16
            \EndFor
        \EndIf
        \For{$\b\theta_j : j=1 \to m$}
            \State Compute posterior probabilities (Eq. \eqref{eq:rlsq_probs})
            \State Compute gradient $g_t$ for $\b\theta_j$ (Eq. \eqref{eq:rslve_gradients})
            \State Compute past gradient and parameter updates (Eq. \eqref{eq:update_gradients} and \eqref{eq:update_parameters})
            \State Update prototype $\b\theta_{j+1} = \b\theta_j - \Delta\b\theta_{t}$ (Eq. \eqref{eq:rslvq_update})
        \EndFor
        \end{algorithmic}
\end{algorithm}
\subsection{Stability during Concept Drift}\label{sec:properties_adaption}
The stability during drift will be shown in the following and we start with the introduction of notation and background.
Given a two-class classifier $h : \mathcal{X} \to \{-1,1\}$  on a given sample $\*x \in \mathcal{X} \subseteq \mathbb{R}^d$. The loss of a classifier $l(h(x)) \in \mathbb{R}^+$ is measuring the performance of a given set of labeled data sampled from the concept $C = p(x,y) = p(y|x) p(x)$.
The expected risk $R(h(x))$ and the empirical risk $\hat{R}(h(x))$ of the classifier $h(x)$ \cite{Cornuejols2010} are defined as
\begin{equation}\label{eq:risk}
    \begin{aligned}
        R(h(x)) &= \mathbb{E}[l(h(x),y)] \\
        &= \int l(h(x),y) p(y|x)p(x) dxdy \leq \hat{R}(x) = \frac{1}{n} \sum_{i=1}^n l(h(x),y),
    \end{aligned}
\end{equation}
while computing the empirical risk with $n$ samples.
\begin{lemma}\label{lemma:full}
 Given two classifiers $h_1(x)$ and $h_2(x)$ with loss $l(h_{(\cdot)}(x))$ and trained on two different concepts $C_1$ and $C_2$ with $c$ classes each and corresponding risks $R_{(\cdot)} (h_{(\cdot)})$. The empirical risk $R_2(h_1(x))$ on $C_2$ can be bounded by
  \begin{equation}
         \hat{R}_2(h_2(x)) \leq 1 - \frac{1}{c}\leq \delta R_2(h_1(x)) \leq \hat{R}_2(h_1(x)),
  \end{equation}
  under the assumption that there is a linear relationship $\delta =  \frac{p_2(x)}{p_1(x)}$ with $\delta \in \mathbb{R}^+$ between the prior of the concepts.
\end{lemma}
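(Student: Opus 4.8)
The plan is to read the four-term chain as three separate inequalities and discharge them in turn, using only ingredients already available. The rightmost, $\delta R_2(h_1)\le\hat R_2(h_1)$, is in essence Eq.~\eqref{eq:risk} applied to $h_1$ under concept $C_2$, which already gives $R_2(h_1)\le\hat R_2(h_1)$; the factor $\delta$ enters only through the covariate-shift hypothesis $p_2(x)=\delta\,p_1(x)$ together with $p_2(y\mid x)=p_1(y\mid x)$ (the paper's standing assumption that real drift always carries a change of the prior $p(x)$), under which $R_2(h_1)=\int l(h_1(x),y)\,p_1(y\mid x)\,p_2(x)\,dx\,dy=\delta R_1(h_1)$ on the common support, so $\delta$ is just the reweighting that transports a $C_1$-quantity onto the $C_2$ scale. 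One then only has to check that this reweighting does not break the inequality --- immediate for $\delta\le 1$, and the bookkeeping is cleanest in the normalized reading $\delta=1$ forced by $\int p_1=\int p_2=1$. I would keep this step deliberately light, since it is routine once the density-ratio assumption is granted.

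For the leftmost inequality $\hat R_2(h_2)\le 1-\frac{1}{c}$ I would exploit the structure of the adaptation step itself rather than any learning-theoretic bound. Immediately after the replacement of Eq.~\eqref{eq:replacement} every prototype sits at the mean of $R$, so $h_2$ is, up to tie-breaking, the constant predictor, whose $0/1$ empirical risk on any labelled $c$-class window equals $1-(\text{majority fraction})\le 1-\frac{1}{c}$. The subsequent RSLVQ updates of Eq.~\eqref{eq:rslvq_update} only decrease the surrogate loss on $R$, and --- granting that the surrogate dominates the $0/1$ risk --- the empirical $0/1$ risk of the trained $h_2$ on $C_2$ therefore stays below the random-guessing level $1-\frac{1}{c}$. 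I would be explicit that this last implication is the modelling assumption ``a trained classifier does no worse than chance on the data it was fit to,'' not a theorem.

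The middle inequality $1-\frac{1}{c}\le\delta R_2(h_1)$ is the crux and, I expect, the main obstacle: it is the formal face of the phrase ``if the distributions are different enough.'' It asserts that $h_1$, whose decision cells were tuned to $C_1$, is \emph{worse} than chance on $C_2$ after the $\delta$-correction, and a fully general derivation seems out of reach, because the $C_2$-risk of $h_1$ depends on exactly how $h_1$'s cells happen to lie inside the $C_2$ class-conditionals. The realistic route is to make ``different enough'' quantitative --- e.g.\ bound the displacement of the prototype-induced cells, or a total-variation distance between $C_1$ and $C_2$, and show that once it exceeds a threshold depending only on $c$ and $\delta$ one obtains $R_2(h_1)\ge\frac{1}{\delta}\!\left(1-\frac{1}{c}\right)$ --- or, as the lemma in effect does, to take this as the hypothesis. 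Chaining the three inequalities then yields the stated bound and, in particular, $\hat R_2(h_2)\le\hat R_2(h_1)$, i.e.\ refitting on the drifted window never hurts.
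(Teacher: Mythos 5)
Your decomposition into three inequalities, and the way you discharge each of them, tracks the paper's own proof almost exactly: the paper writes $R_2(h_1)=\int l(h_1(x),y)\,p_2(y|x)\,p_2(x)\,dxdy$, substitutes $p_2(x)=\delta p_1(x)$ to pull the factor $\delta$ out of the integral, then simply \emph{asserts} that ``for large $\delta$'' the risk $R_2(h_1)$ exceeds the chance level $1-\frac{1}{c}$, and that $\hat R_2(h_2)\le 1-\frac{1}{c}$ because $h_2$ is trained on concept two and is ``in the worst case random.'' So the middle inequality you identify as the crux is not derived in the paper either; it is precisely the informal ``distributions different enough $\Rightarrow$ $\delta$ large'' step, taken as a hypothesis, exactly as you propose. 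Your handling of the leftmost inequality via the mean-prototype/constant-predictor observation is in fact more concrete than the paper's, which offers only the modelling assumption you state explicitly (and you are right to flag the tie-breaking and surrogate-vs-$0/1$ caveats). You have also spotted a genuine defect the paper never addresses: a pointwise relation $p_2(x)=\delta p_1(x)$ with constant $\delta$ integrates to $\delta=1$, so the ``large $\delta$'' regime is vacuous unless the ratio is meant only locally, on the region relevant to $h_1$'s decision cells. In short, your proposal is a faithful --- and somewhat more careful --- reconstruction of the published argument; the gaps you name are gaps in the lemma and its proof as printed, not omissions on your part.
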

\begin{proof}
First consider the first classifier on the concept one, where it is trained on with the risk
\begin{equation}\label{eq:risk_concept_1}
    \begin{aligned}
        R_1(h_1(x)) &= \mathbb{E}[l(h_1(x),y)] = \int l(h_1(x),y) p_1(y|x)p_1(x) dxdy,
    \end{aligned}
\end{equation}
where $R_1$ is the risk corresponding to concept $C_1$. Applying $h_1(x)$ to concept two leads to the expected risk
\begin{equation}\label{eq:risk_h1_r1}
    \begin{aligned}
        R_2(h_1(x)) =  \int l(h_1(x),y) p_2(y|x)p_2(x) dxdy.
    \end{aligned}
\end{equation}
Similar as in \cite{Cornuejols2010}, we assume the relationship between the two concepts by $p_2(x) = \delta p_1(x)$, e.g.\, two variants of Gaussian distributions, then we can rewrite the risk to
\begin{equation}\label{eq:risk_applied}
    \begin{aligned}
        R_2(h_1(x)) &= \int l(h_1(x),y) p_2(y|x) {\delta} p_1(x) dxdy \\
        &= \delta \int l(h_1(x),y) p_2(y|x)  p_1(x) dxdy.
    \end{aligned}
\end{equation}
The expected risk $R_2(h_2(x))$ is analog to Eq. \eqref{eq:risk_h1_r1}. If the difference in prior distribution between the two concepts is large enough, then $\delta$ becomes large. Hence, for large $\delta$, the expected risk of $R_2(h_1(x))$ becomes larger than random guessing, which is $1-\frac{1}{c}$ for $c$ classes. Because $h_2(x)$ is trained on concept two, we expect that the empirical risk $\hat{R}_2(h_2(x))$ in the worst-case is random and therefore
\begin{equation}
        \hat{R}_2(h_2(x)) \leq 1-\frac{1}{c}\leq \hat{R}_2(h_1(x)).
\end{equation}
\end{proof}
This can be implemented as $h_1(x)$ being the RSLVQ model at time $t-1$ and $h_2(x)=g(R)$ at time $t$ after the adaptation procedure. Hence, for large enough differences between windows, which the Kswin detects given a small $r$ and $\alpha$, the performance of the adaptation yields a smaller loss in comparison to the model from $t-1$. This also means that the classification result is not negatively affected even if Kswin detects a change in prior distribution within a concept, which is not real concept drift. Therefore, the false positives of Kswin do not harm. The effect is demonstrated in Fig. \ref{Fig:PerformanceRC}, showing that $\delta$ is always sufficiently large so that $g(R)$ achieves better performance on the Mixed concept drift stream.

\subsection{Time and Memory Complexity}\label{sec:memory_time_complexity}
The RRSLVQ optimized via online momentum gradient descent has the time complexity of $\mathcal{O}(m)$ at given time $t$ for $m$ prototypes without concept drift.
The concept drift handling has the complexity $\mathcal{O}(r\cdot m)$. The KS-Test can be implemented in log-linear time \cite{DosReis2016}.
The demand for memory depends on the number of prototypes $m$ and sliding window size $n$. Therefore, the model needs at maximum $m\times d$ and $n \times d$ as real float numbers in memory, which accumulates in the complexity of $\mathcal{O}(d\cdot(m+n)) = \mathcal{O}(d\cdot(m+300))$ with $n=300$. For using RRSLVQ in embedded systems with restrictive memory, we follow the \textit{any memory framework} by \cite[p. 6]{Gama2014} and we suggest setting the number of prototypes plus the window length to a maximum of $d\cdot(m+300)< k$ with $k$ as maximum float memory storage capacity.

\section{Experiments}\label{sec:experiments}
The experimental section of the paper consists of three parts. First, we provide the study design, the performance metrics and a data set overview with their properties. The second part analyses the performance of Kswin as an independent approach compared to other concept drift detectors. The last part shows the performance of the RSLVQ paired with the Kswin detector.
\subsection{Study Design}
We use six stream generators (synthetic data) and six real-world datasets. We follow the study design of \cite{Bifet2015}, but additional, we simulate frequent gradual/abrupt reoccurring drift with streams having abrupt or gradual drift. This type of drift stream introduces reoccurring drift with a certain frequency. The frequency is specified in section \ref{sec:comparison_cd} and section \ref{sec:comparison_cls} respectively. An overview of the streams is given in Tab. \ref{tab:used-streams}. The synthetic and the real-world datasets are described in detail in \ref{sec:dataset_description}.

In a stream setting, prediction performance is measured with interleaved test-then-train accuracy \cite{Losing2017a,Bifet2015}. It is the moving average accuracy including the current sequence of data at time $t$ by
\begin{equation}
    A(S) = \frac{1}{t} \sum_{i=1}^t 1(h_{i-1}(\mathbf{x}_i) = y_i).
\end{equation}
Where $1(\cdot)$ is an indicator function and is one for a correct classification and zero otherwise. For the evaluation, the length of the streams is fixed to $t_{max}=1,000,000$. At $t_{max}$, the interleaved test-then-train accuracy becomes the overall mean accuracy. We use this and the kappa-statistics for the subsequent performance tables.
A stream classifier within this setting first predicts the given sequence and then learns with it. This evaluates the ability to predict and learn anytime over a long period \cite{Losing2017a}.

\begin{table}[ht]
    \centering
    \resizebox{\textwidth}{!}{%
    \begin{tabular}{llclcc}
        \hline
        \textbf{Data set} & \textbf{\# Instances} & \textbf{\# Features} & \textbf{Type} & \textbf{Drift} & \textbf{\# Classes}\\
        \hline
        SEA$_a$ & 1,000,000 & 3 & Synthetic & A & 2\\
        SEA$_g$ & 1,000,000 & 3 & Synthetic & G & 2\\
        MIXED$_a$ & 1,000,000 & 4 & Synthetic & A & 2\\
        MIXED$_g$ & 1,000,000 & 4 & Synthetic & G & 2\\
        RTG & 1,000,000 & 10 & Synthetic & N & 2\\
        RBF$_m$ & 1,000,000 & 10 & Synthetic & I$_m$ & 5\\
        RBF$_f$ & 1,000,000 & 10 & Synthetic & I$_f$ & 5\\
        HYPER & 1,000,000 & 10 & Synthetic & I$_f$ & 2\\
        \hline
        POKER & 829,201 & 11 & Real & - & 10\\
        GMSC & 120,269 & 11 & Real & - & 2\\
        ELEC & 45,312 & 8 & Real & - & 2\\
        COV & 581,012 & 27 & Real & - & 7\\
        AIR & 539,383 & 8 & Real & - & 2\\
        SQRE & 200,000 & 2 & Real & - & 4\\
        \hline
    \end{tabular}}
    \caption[Configuration of the data sets]{Configuration of the data sets (A: Abrupt Drift, G: Gradual Drift, $\mathrm{I}_m$: Moderate Incremental Drift, $\mathrm{I}_f$: Fast Incremental Drift and N: No Drift)}
    \label{tab:used-streams}
\end{table}

\subsection{Comparison of Concept Drift Detectors}\label{sec:comparison_cd}
The Kswin detector is tested against other commonly used detectors, i.e\ Adwin \cite{Bifet2006}, Drift-Detection Method (DDM) \cite{Baena006} and Early Drift Detection Method (EDDM) \cite{Gama2004}. The parameters for Kswin are $(r = 30,n=300)$ and $\alpha=0.0001$. The Adwin parameter is $\alpha=0.002$. The DDM algorithm has a minimum number of test size, which is set to $30$ and a detection threshold set to $3$. The window size of EDDM is also set to $30$ with a detection threshold of $\beta = 0.95$.

Each tested stream has 100,000 time steps with a batch size of ten per time step and in summary, there are one million samples per stream.
All time steps of occurring drifts are compared against the predicted time steps of the detectors, which are either zero for no concept drift and one if drift is detected. Summarizing, there are ten standard concept drift streams with one concept drift per stream and six frequent reoccurring concept drift streams with 99,999 occurrences of concept drift per stream.
The results of the concept drift detectors are split up into two parts. For both parts, the detectors test every data dimension separately and not the performance values of classifiers.

The results of the first part of the evaluation are given in Tab. \ref{tab:confusion_matrix}. It shows a confusion matrix of the detectors tested on the listed concept drift stream generators given in Tab. \ref{tab:used-streams} plus the frequent reoccurring versions. The Kswin algorithm detect the concept drifts and has, by far, the most true positives but also the most false positive. The detection accuracy of Kswin is roughly $5\%$, while the detection rate of remaining detectors is roughly $0.001\%$. Summarizing, concept drift detection per dimension on stream data is somewhat unreliable and most of the time, the detectors just missing concept drift, because of insensitivity. The Kswin detector is an improvement to the state of the art detectors by means of detection rate. The false positive rate should be tackled in future work, but as shown in the prototype adaptation strategy in Sec. \ref{sec:properties_adaption} and shown in the following experiments, where Kswin is paired with the Naive Bayes classifier, false positives are not critical.
\begin{table}[b!]
    \centering
    \begin{tabular}{lllll}  \hline
        Detectors &True Negative&False Positive&False Negative&True Positive\\  \hline
    Kswin  &748956&51040&379345&20659 \\
    Adwin  &799823&173&399933&71 \\
    EDDM   &799941&55&399971&33 \\
    DDM     & 799984&12&400004&0 \\
      \hline
    \end{tabular}
   \caption{Confusion matrix of concept drift detectors showing that Kswin detects the most drifts but also has most false positives. It is tested on standard concept streams (10) and frequent reoccurring versions (6) of them. Overall, there are 400,004 concept drifts within 120,000,000 time steps. Each of the eight standard streams has one drift, and each of the frequent reoccurring concept drift streams has 99,999 drifts.}
    \label{tab:confusion_matrix}
\end{table}

The second part is shown in Tab. \ref{tab:detector_performance} and gives the prediction performance of the Naive Bayes classifier combined with a given detector. If a detector notices concept drift, the current underlying learning model of a respected detector is discarded and a new one is learned with the current batch of data. This experiment ensures that every detector uses the same underlying classifier and the classifier does not influence the performance.
\begin{table}[t!]
    \centering
    \begin{tabular}{lllll}  \hline
Standard Streams &Kswin&Adwin&EDDM&DDM\\  \hline
MIXED$_a$&0.8646&\textbf{0.9024}&0.5057&0.4998\\
MIXED$_g$&\textbf{0.8645}&0.5004&0.5000&0.5001\\
Hyperplane&0.5797&\textbf{0.6090}&0.5890&0.5883\\
RTG&0.6325&\textbf{0.7048}&0.6318&0.6655\\
RBF$_f$&\textbf{0.6584}&0.5299&0.4952&0.5009\\
RBF$_m$&0.6459&\textbf{0.6512}&0.5034&0.6019\\
SEA$_a$&0.8400&\textbf{0.8909}&0.7036&0.8876\\
SEA$_g$&0.8334&0.8304&0.6787&\textbf{0.8396}\\  \hline
Standard Mean &\textbf{0.7456}  &0.7089  &0.5853  &0.6435\\  \hline
MIXED$_{ra}$&\textbf{0.783}&0.4902&0.490&0.4896\\
MIXED$_{rg}$&\textbf{0.7766}&0.4921&0.4901&0.4975\\
SEA$_{ra}$&0.8385&\textbf{0.8766}&0.8104&0.8745\\
SEA$_{rg}$&0.8302&\textbf{0.8688}&0.806&0.847\\  \hline
Reoccurring Mean&\textbf{0.8217}  &0.7150 &0.6793 &0.7065\\ \hline
Overall Mean &\textbf{0.7837} &0.7120 &0.6323 &0.6750\\  \hline
    \end{tabular}
    \caption{Interleaved mean accuracy of concept drift detectors with Naive Bayes as the underlying classifier. Tested on standard concept drift streams and the frequent reoccurring versions of them separated in two parts. The results are overall similar comparing prediction performance, but in the mean, the Kswin detector outperforms the remaining detectors. Winner marked bold. a = abrupt, if = incremental fast, im = incremental medium, g = gradual, ra = frequent reoccurring}
    \label{tab:detector_performance}
\end{table}{}
Inspecting each drift separately, the Kswin algorithms is only best at four out of 16 streams. However, Kswin has the best mean performance. The second best algorithm is Adwin. The results at MIXED are interesting because the algorithms that do not recognize the concept drift, are not able to switch to the new concept. EDDM and DDM are particularly affected by this and also Adwin at frequent reoccurring MIXED. However, apart from MIXED, the performance of the detectors on the frequently reoccurring drift streams is not much worse compared to the standard streams.
\subsection{Comparison of Stream Classifiers}\label{sec:comparison_cls}
In these experiments, we compared the RRSLVQ with Hoeffding Adaptive Tree \cite{Bifet2009}, OzaBaggingAdwin \cite{DBLP:conf/smc/Oza05}, Adaptive Random Forest \cite{Gomes2017}, SamKNN \cite{Losing2017} and RSLVQ \cite{Seo2003} as the baseline.

In this setting, the frequent reoccurring drifts starting at sample 2000 and, further, every 1000 samples after the last drift is finished. The width of gradual drift is 1000. For every non-frequent reoccurring stream, we are adding abrupt/gradual concept drift at position 500,000, while gradual drift has a width of 50,000.

The RRSLVQ provides stable performance during drift and high adaptation rate shown in Fig. \ref{Fig:PerformanceRC}, superior w.r.t. other methods. Other variants of concept drift handling, like prototype insertion or prototype replacement based on the Adwin detector are not providing stability during a drift. Hence, both parts, the Kswin and the prototype adaptation strategy, are equally relevant for the stability of RRSLVQ.
\begin{figure}[b!]
  \centering
  \includegraphics[width=1\textwidth]{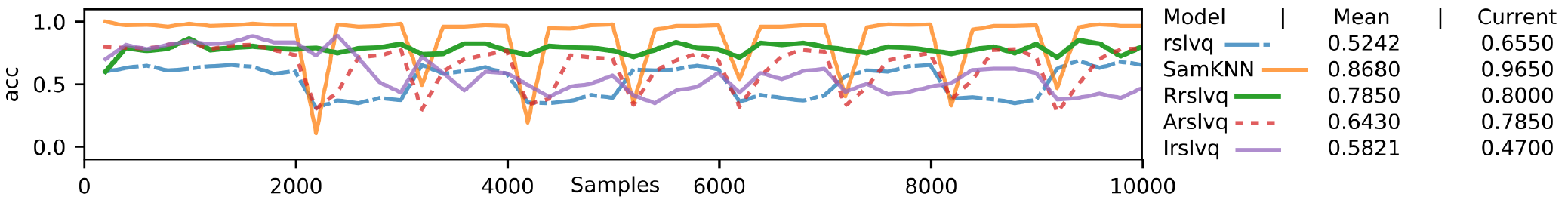}
  \caption{Performance of baseline RSLVQ, RRSLVQ, Incremental-RSLVQ, Adwin-RSLVQ, and SamKNN \cite{Losing2017a} on Mixed stream in SciKit-Multiflow \cite{skmultiflow}. Plot shows clear drops in performance of non-RRSLVQ methods during abrupt concept drift. From point 2000, drift happens every 1000 points after last drift is finished. The line shows accuracy over the last 200 samples. Best viewed in color.}\label{Fig:PerformanceRC}
\end{figure}

An overview of memory consumption during stream processing is shown in Fig. \ref{fig:memory_usage}. It shows that the RSLVQ variants and HAT have very low memory requirements and are stable in memory consumption during the drift. This validates Sec. \ref{sec:memory_time_complexity}. The stable consumption is not given by the remaining classifiers and at the detection of drifts, the memory usage fluctuates.

 \begin{figure}[b!]
    \centering     
    \begin{subfigure}[]{.47\linewidth}
    \includegraphics[width=1\textwidth]{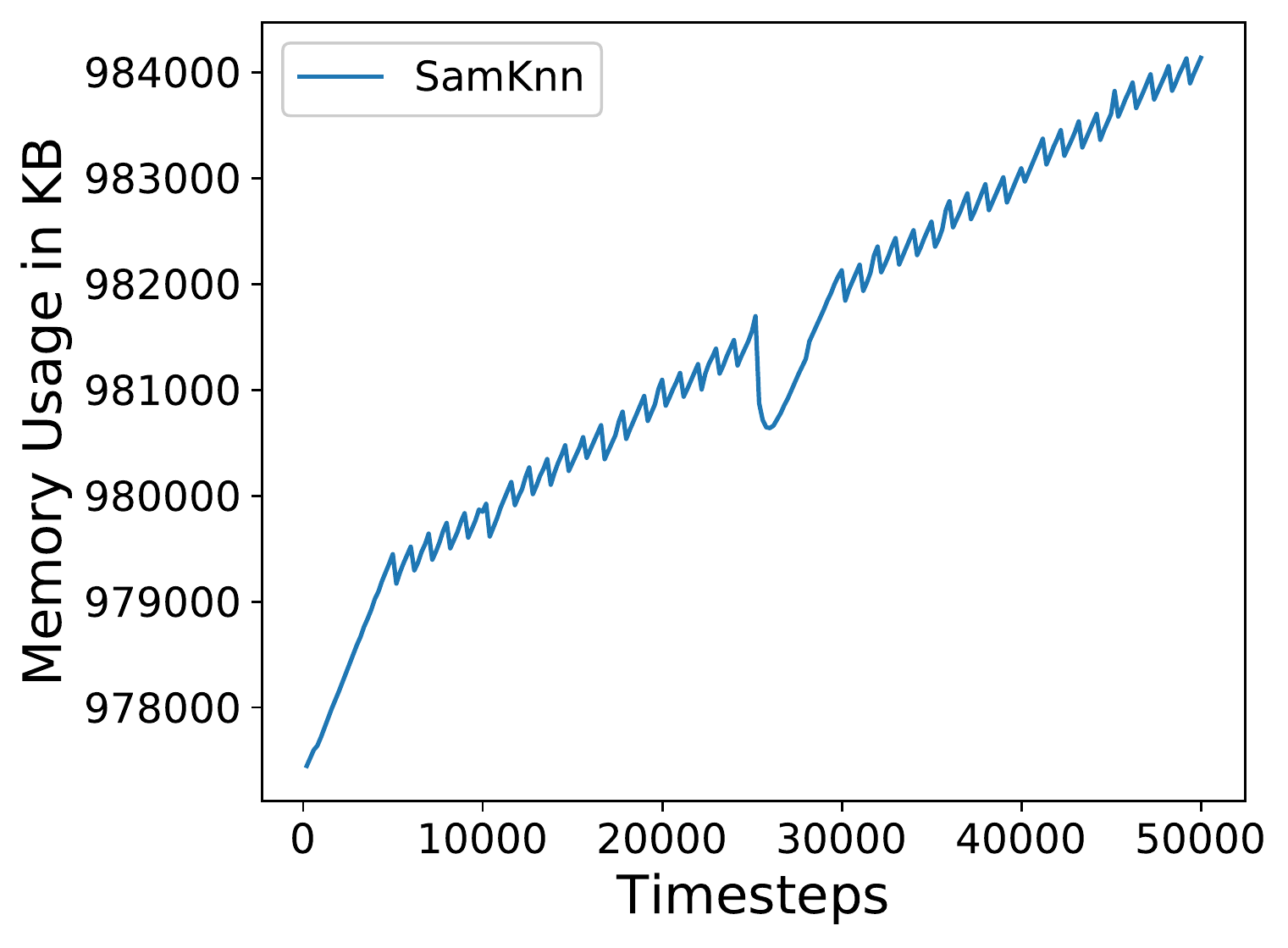}
    \subcaption{Memory SamKnn Mixed$_a$ \label{fig:samknn_a}}
    \end{subfigure}
    \begin{subfigure}[]{.47\linewidth}
    \includegraphics[width=1\textwidth]{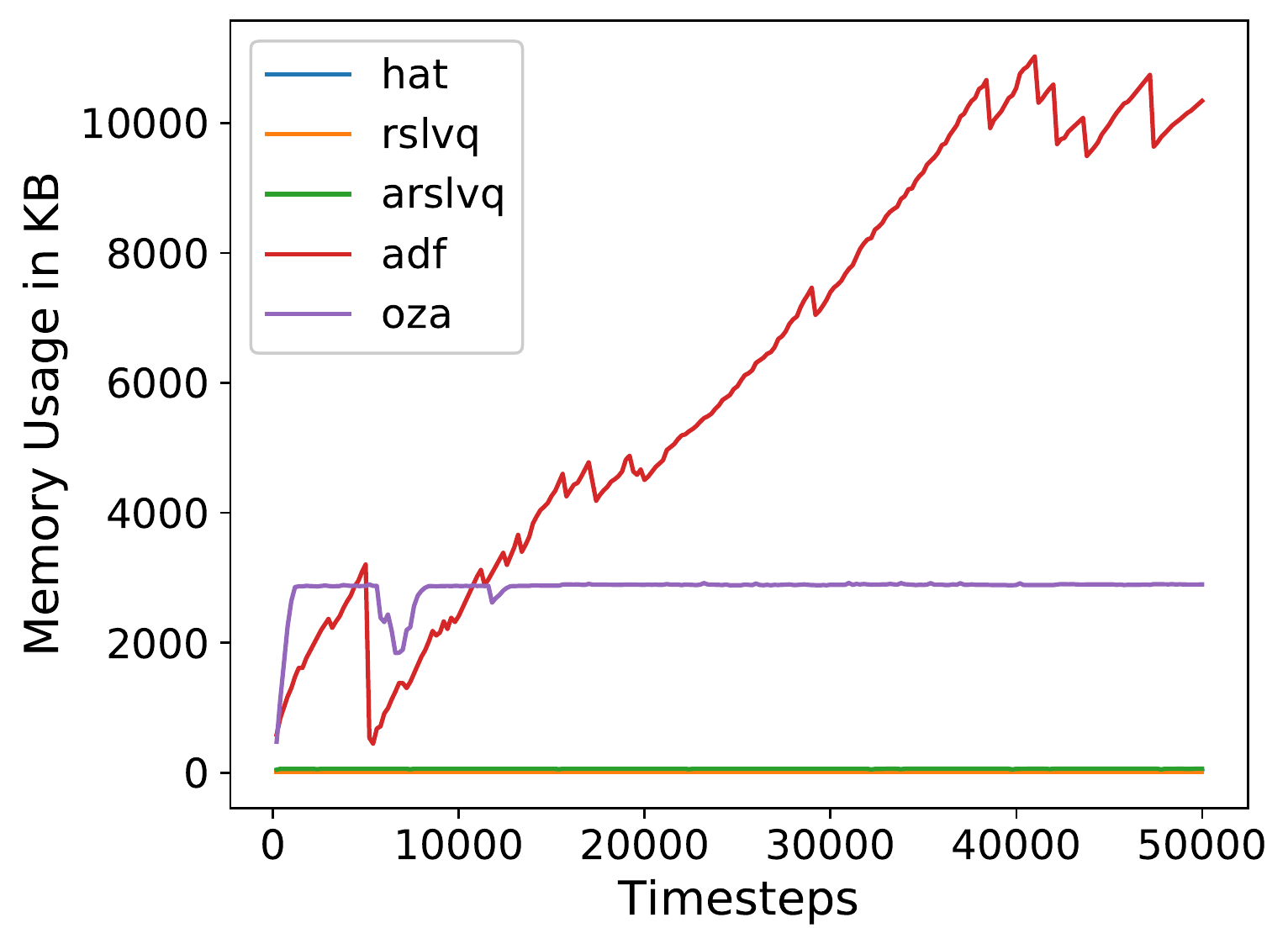}
    \subcaption{Memory Others Mixed$_a$ \label{fig:others_a}}
    \end{subfigure}
    \begin{subfigure}[]{.47\linewidth}
    \includegraphics[width=1\textwidth]{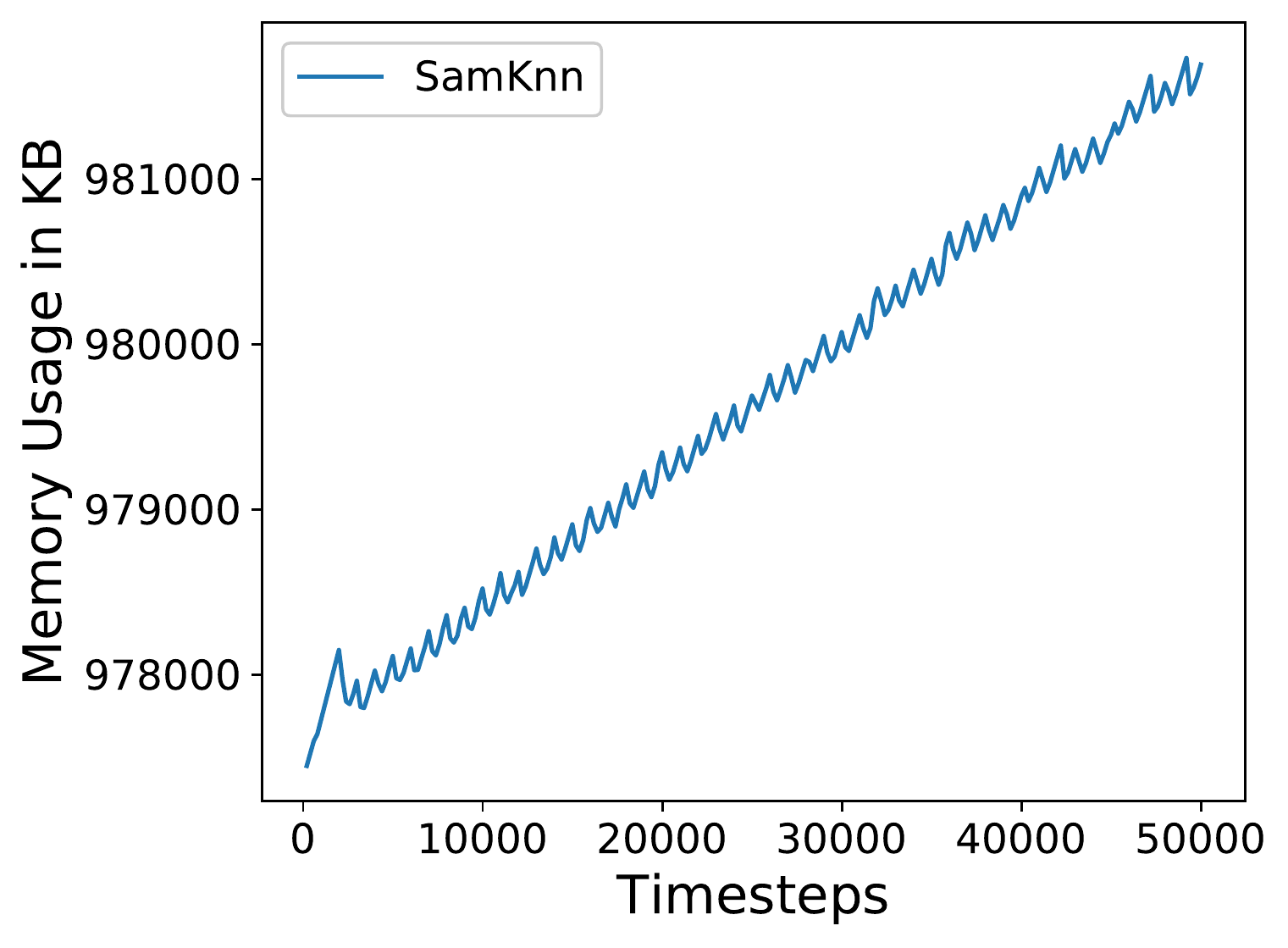}
    \subcaption{Memory SamKnn Mixed$_{ra}$ \label{fig:samknn_ra}}
    \end{subfigure}
    \begin{subfigure}[]{.47\linewidth}
    \includegraphics[width=1\textwidth]{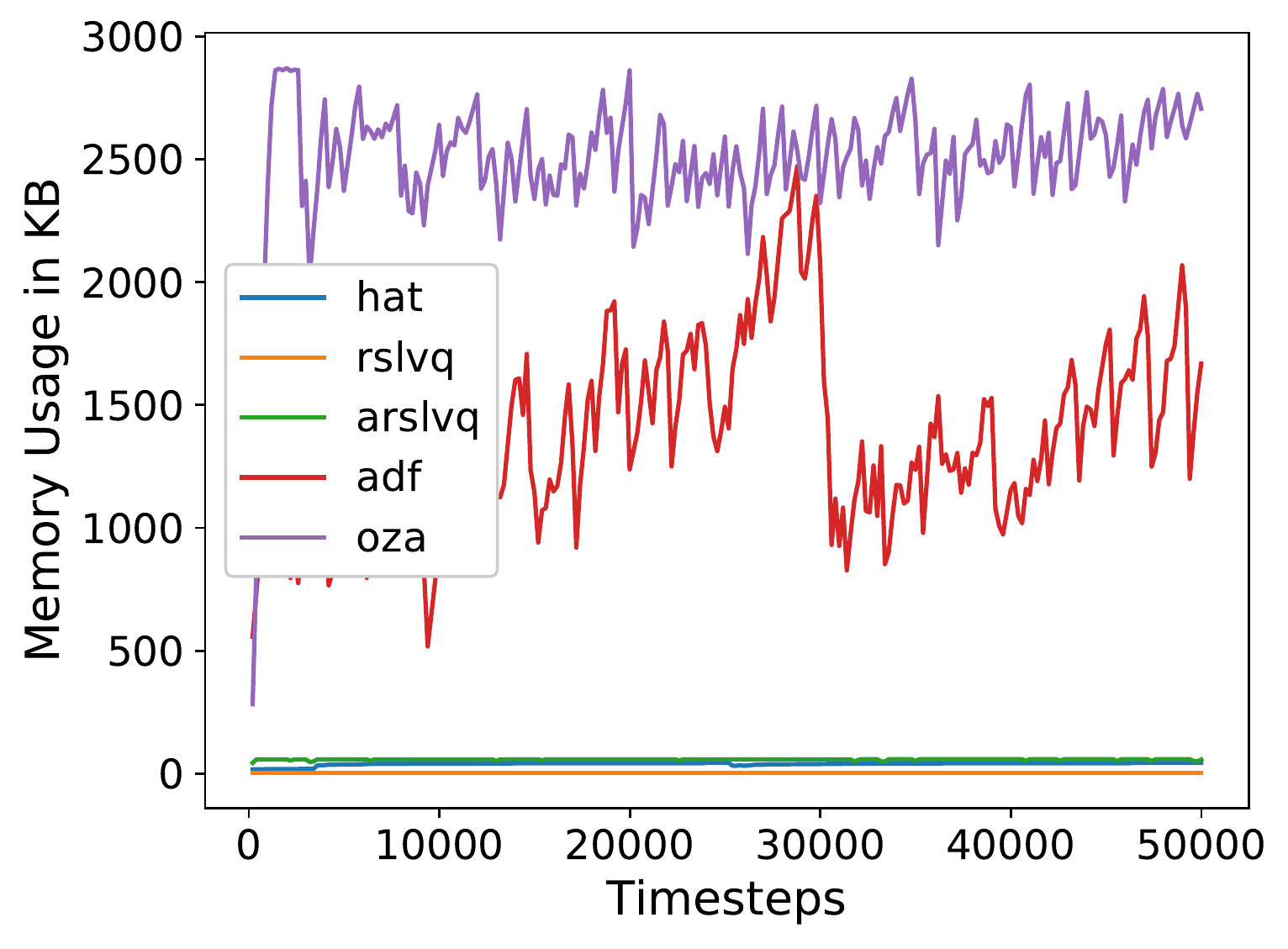}
    \subcaption{Memory Others Mixed$_{ra}$\label{fig:others_ra}}
    \end{subfigure}
    \caption{Memory usage of stream classifiers tested on Mixed stream with abrupt drift (top) and frequent abrupt reoccurring drift (bottom). SAMKNN is plotted separately due to scaling issues. It shows the constant memory consumption of RSLVQ methods and HAT even during drift. \label{fig:memory_usage}}
\end{figure}

The time result is plotted in Fig. \ref{fig:time} as incremental time in seconds per processed samples. It also validates the time complexity of RRSLVQ and further shows that every other stream classifier also has linear time complexity. However, the RRSLVQ needs less total time as the ARF and OZA. The RSLVQ is slightly faster as the RRSLVQ because of missing concept drift detector, non-momentum-based SGD and prototype adaptation strategy. The HAT algorithm is the fastest. All in all, the classifiers do not need any additional time to handle the concept drift.
\begin{figure}[]
    \centering     
    \begin{subfigure}[]{.45\linewidth}
    \includegraphics[width=1\textwidth]{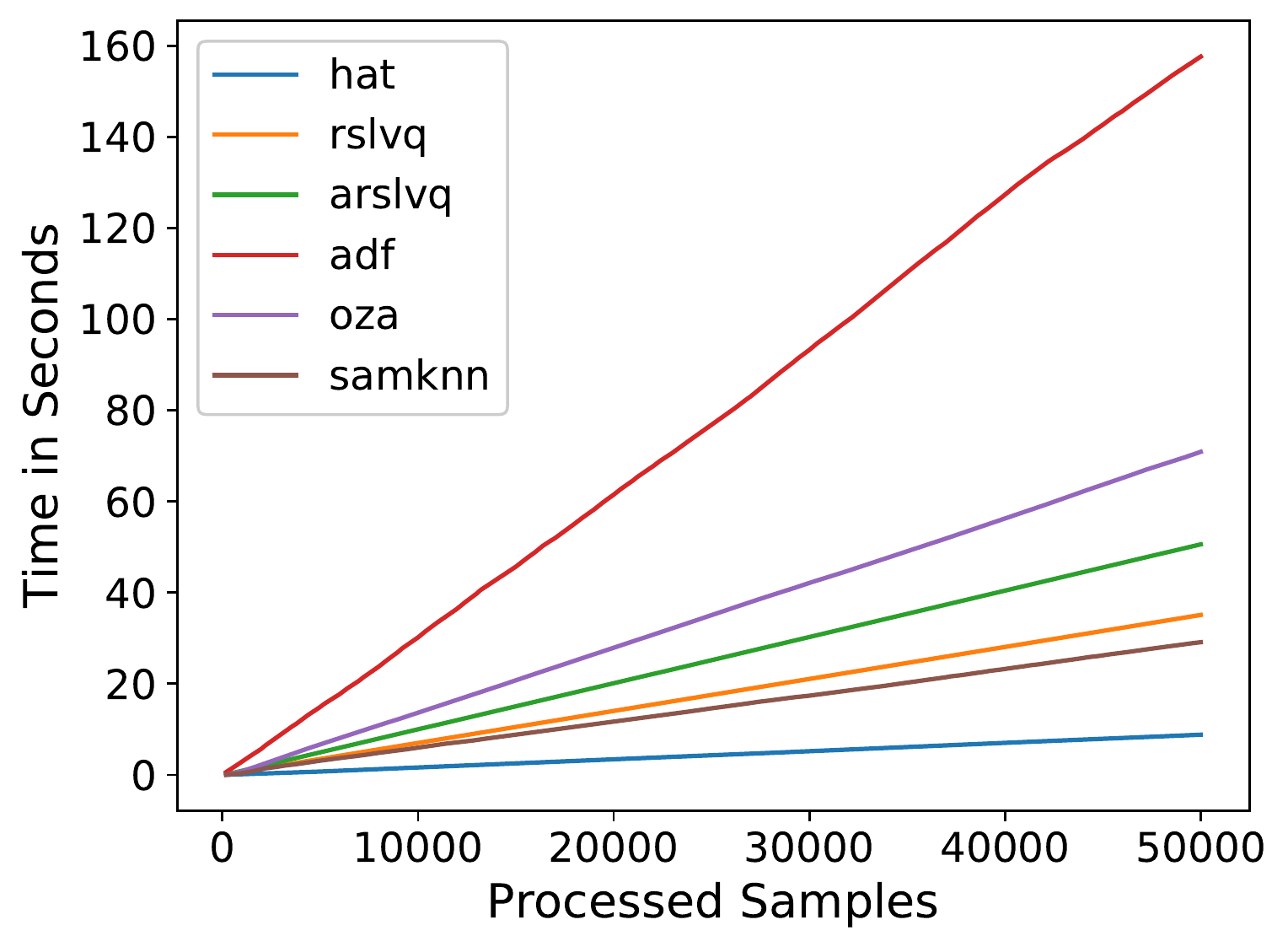}
    \subcaption{Incremental Time Mixed$_a$ \label{fig:abrupt_time}}
    \end{subfigure}
    \begin{subfigure}[]{.45\linewidth}
    \includegraphics[width=1\textwidth]{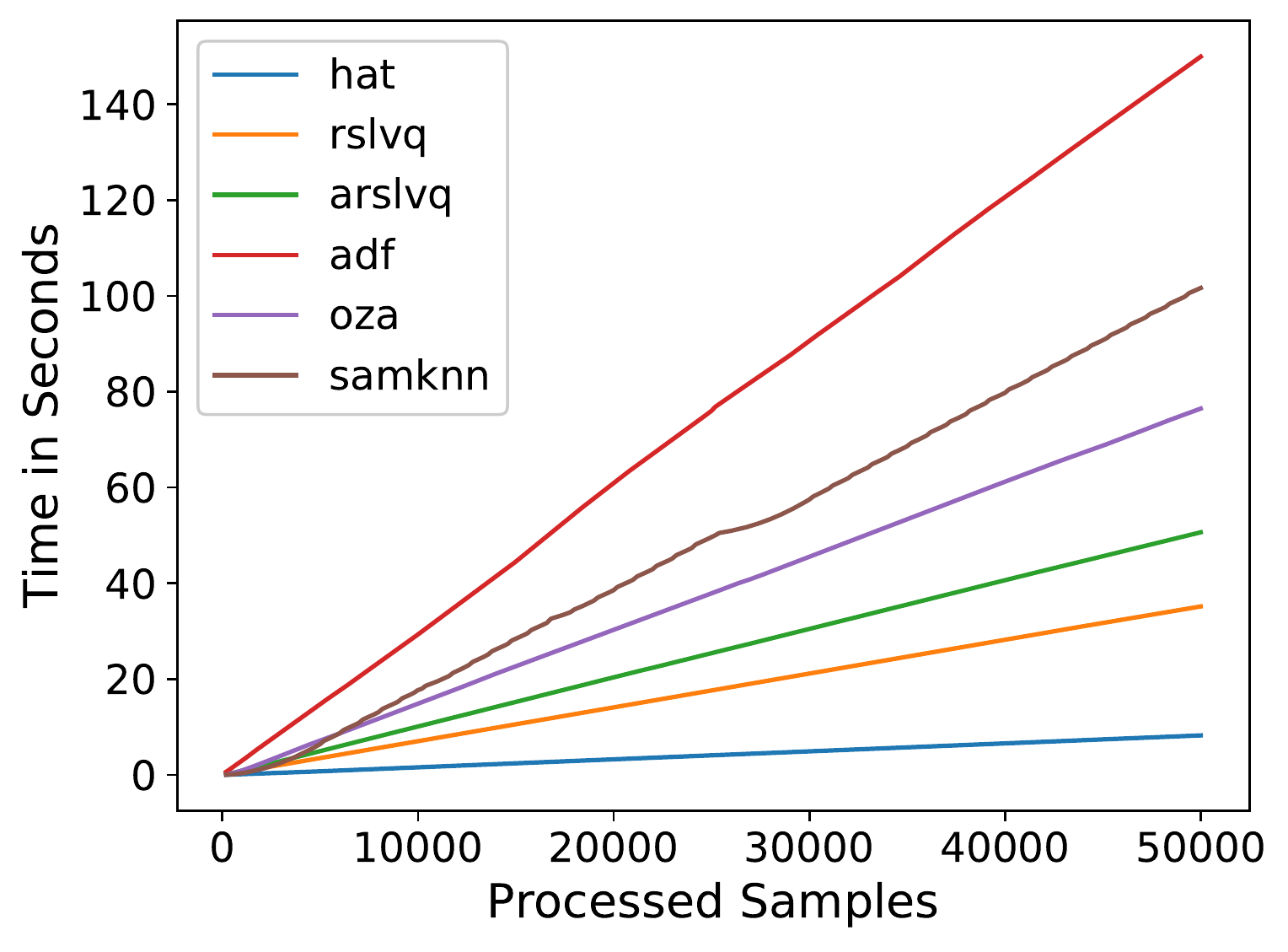}
    \subcaption{Incremental Time Mixed$_{ra}$ \label{fig:ra_time}}
    \end{subfigure}
   \caption{Incremental time in seconds per amount of samples processed on Mixed generator with abrupt drift (left) and frequent reoccurring drift (right). All tested methods are increasing linearly in time due to linear complexity per batch, even during drift. \label{fig:time}}
\end{figure}

The results of the prediction performance of the concept drift classifiers on the data streams are presented in Tab. \ref{tab:results-on-streams-acc}. Our approach shows a boost in performance to baseline RSLVQ and is comparable to other concept drift classifiers like HAT and OZA.
\begin{table}[b!]
  \resizebox{\textwidth}{!}{%
  \begin{tabular}{@{\extracolsep{\fill}}lllllll@{}}
  \hline
  Streams & ARF \cite{Gomes2017} & SAMKNN \cite{Losing2017a} & HAT \cite{Bifet2009} & RSLVQ \cite{Seo2003} & \underline{RRSLVQ} & OZA \cite{DBLP:conf/smc/Oza05}\\
  \hline
COV & 0.699 & \textbf{0.8937} & 0.8221 & 0.3647 & 0.7278  &0.6909
\\
ELEC & \textbf{0.856} & 0.7250 & 0.7750 & 0.6220 & 0.6448 & 0.740\\
POKER & \textbf{0.8071} & 0.7951 & 0.6455 & 0.7206 & 0.5779 & 0.7904\\
WTHR & 0.7392 & 0.7784 & 0.6771 & 0.6502 & 0.6703& \textbf{0.7827}\\
GMSC & \textbf{0.930} & 0.9270 & 0.8800 & 0.7800 & 0.9213  & 0.9196
\\
SQR & 0.5114 & \textbf{0.9659} & 0.7296 & 0.334 & 0.5993 & 0.4439\\
  \hline
REAL MEAN & 0.7572 & \textbf{0.8476} & 0.7548 & 0.5787 & 0.6838 & 0.7556\\
  \hline
MIXED$_a$& 0.934& \textbf{0.990}& 0.870& 0.733& 0.8930 & 0.975\\
MIXED$_g$& 0.912& \textbf{0.990}& 0.870& 0.733& 0.8938  & 0.975\\
Hyperplane & 0.5091 & 0.5602 & 0.622 & 0.5254 & \textbf{0.6208} & 0.5731\\
RTG & 0.5827 & {0.7008} & 0.6582 & 0.5769 & \textbf{0.8075} & 0.656\\
RBF$_{if}$ & 0.7457 & \textbf{0.9371} & 0.6242 & 0.5939 & 0.6648 & 0.9353\\
RBF$_{im}$ & 0.7983 & 0.9389 & 0.7344 & 0.5992 & 0.6797& \textbf{0.9557}\\
SEA$_a$ & 0.8866 & 0.8898 & 0.8362 & 0.8077 & \textbf{0.8924} & 0.881\\
SEA$_g$ & 0.8764 & 0.8869 & 0.8295 & 0.8033 & \textbf{0.8996} & 0.8747\\
MIXED$_{ra}$ & 0.8492 & \textbf{0.8535} & 0.5288 & 0.7316 & 0.7794  & 0.5464\\
MIXED$_{rg}$ & \textbf{0.8816} & 0.8535 & 0.5288 & 0.7227 & 0.7790 & 0.5468\\
SEA$_{ra}$ & 0.8668 & 0.8784 & 0.8312 & 0.8019 & 0.8857 & \textbf{0.9471}\\
SEA$_{rg}$ & 0.851 & 0.8808 & 0.8273 & 0.8049 & 0.8908 & \textbf{0.9471}\\
  \hline
ARTIFICIAL MEAN & 0.8078 & \textbf{0.8633} & 0.73 & 0.6899 & 0.8072 & 0.8178\\
  \hline
OVERALL MEAN & 0.7909 & \textbf{0.8581} & 0.7383 & 0.6528 &  0.7455 & 0.7971\\
  \hline
    \end{tabular}}
  \caption{Interleaved mean accuracy on data streams. Moving \underline{average} of accuracy on one million samples. Winner marked bold. a = abrupt, if = incremental fast, im = incremental medium, g = gradual, ra = frequent reoccurring}
  \label{tab:results-on-streams-acc}
  \end{table}
The RSLVQ is the worst classifier on the tested real-world datasets, affected by the worse performance on COV and SQR. The RRSLVQ performs about 8 \% better when looking at the mean of the real-world datasets. However, it is still worse than the other classifiers, especially SAMKNN, which performed best. Especially on COV, where RSLVQ had an accuracy of 36 \%, RRSLVQ improved the prediction to an accuracy of 73 \%, which is approximately equal to the performance of ARF. On the artificial data streams, the RSLVQ performs worst with a mean accuracy of 69 \%. The RRSLVQ does a better job on the synthetic concept drift streams due to the drift detector and thus has better accuracy than RSLVQ and HAT with 81 \%. Again,  SAMKNN performed best with an accuracy of 86 \%. Overall, RRSLVQ performed approximately equal to OZA with an accuracy of 81 \%, which is a performance increase of 12 \% compared to the baseline RSLVQ.

Comparing the prediction results in Tab. \ref{tab:results-on-streams-acc} to the time and memory consumption in Fig. \ref{fig:time} and \ref{fig:memory_usage}, it encourages the assumption that there is a trade-off between prediction performance and model size and time.

The results of the stream experiment measured by Kappa statistics are shown in Tab. \ref{tab:results-on-streams-kappa}. The improved performance of RRSLVQ compared to RSLVQ is also given w.r.t. Kappa. This means that RRSLVQ also performs better on imbalanced data and has no tendency to only predict one label. Especially on COVTYPE, where RSLVQ was not able to separate the classes of the dataset, RRSLVQ showed a Kappa score of 45.98 \%.  The improved score is not only related to frequent reoccurring drift. It is also given when using other types of drift. On the Hyperplane generator, RRSLVQ performed best and is the only algorithm with a Kappa score $> 0.2$. This is due to the case that the distribution of the stream changes naturally very often and thus needs a sensible concept drift detector. However, the Kappa score is still very low on this dataset. Given the performance of the baseline classifier, the RRSLVQ does a good job of improving the RSLVQ on concept drift streams. On the real-world streams, the Kappa score is also better, but the overall improvement is small. This is affected by the fact that RSLVQ did a better job of distinguishing the ten classes of the POKER dataset. Overall, SAMKNN performed best with a score of 65 \%, while RRSLVQ performed very similar to HAT at 45 \%

\begin{table}[t!]
  \resizebox{\textwidth}{!}{%
  \begin{tabular}{@{}l llllll@{}}
  \hline
  Streams & ARF \cite{Gomes2017} & SAMKNN \cite{Losing2017a} & HAT \cite{Bifet2009} & RSLVQ \cite{Seo2003} & \underline{RRSLVQ} & OZA \cite{DBLP:conf/smc/Oza05}\\
    \hline
COV & 0.4267 & \textbf{0.8285} & 0.7139 & 0 & 0.4598 & 0.8233\\
ELEC & \textbf{0.6981} & 0.4334 & 0.5347 & 0.2158 & 0.2616 & 0.4609\\
POKER & \textbf{0.6673} & 0.6259 & 0.3689 & 0.4969 & 0.3498 & 0.6162\\
WTHR & 0.3352 & 0.4439 & 0.2634 & 0.268 & 0.2743 & \textbf{0.47}\\
GMSC & \textbf{0.0788} & 0.002 & 0.0069 & -0.0019 & 0.0017 & 0.0026\\
SQRE & 0.3486 & \textbf{0.9546} & 0.6394 & 0.112 & 0.209 & 0.2054\\
  \hline
REAL MEAN & 0.4258 & \textbf{0.5481} & 0.4212 & 0.1818 & 0.2594 & 0.4297\\
  \hline
MIXED$_{a}$ & 0.8685 & \textbf{0.9791} & 0.7392 & 0.4643 & 0.8117 & 0.9505\\
MIXED$_{g}$ & 0.823 & \textbf{0.9791} & 0.7392 & 0.4643 & 0.8138 & 0.9505\\
Hyperplane & 0.0175 & 0.1204 & 0.2441 & 0.0508 & \textbf{0.2555} & 0.1462\\
RTG & 0.1631 & \textbf{0.3441} & 0.273 & 0.1315 & 0.2574 & 0.1723\\
RBF$_{if}$ & 0.4701 & \textbf{0.8736} & 0.2434 & 0.1861 & 0.2457 & 0.8658\\
RBF$_{im}$ & 0.594 & 0.8773 & 0.4297 & 0.1601 & 0.2831 & \textbf{0.9063}\\
SEA$_{a}$ & \textbf{0.7454} & 0.7282 & 0.6029 & 0.5434 & 0.7398 & 0.7082\\
SEA$_{g}$ & 0.723 & 0.7626 & 0.6435 & 0.5905 & \textbf{0.7896} & 0.7376\\
MIXED$_{ra}$ & 0.6982 & \textbf{0.707} & 0.0576 & 0.2998 & 0.4445 & 0.0928\\
MIXED$_{rg}$ & \textbf{0.7631} & 0.707 & 0.0576 & 0.2989 & 0.4313 & 0.0936\\
SEA$_{ra}$ & 0.6997 & 0.7143 & 0.609 & 0.5780 & 0.7324 & \textbf{0.8851}\\
SEA$_{rg}$ & 0.6561 & 0.7426 & 0.6286 & 0.5802 & 0.7646 &\textbf{ 0.8851}\\
  \hline
ARTIFICIAL MEAN & 0.6018 & \textbf{0.7113} & 0.439 & 0.3623 & 0.5474 & 0.6162\\
  \hline
OVERALL MEAN & 0.5431 & \textbf{0.6569} & 0.4331 & 0.3022 & 0.4515 & 0.554\\
  \hline

    \end{tabular}}
  \caption{Interleaved test-then-train Kappa statistics on data streams. Moving \underline{average} of Kappa on one million samples. Winner marked bold. a = abrupt, if = incremental fast, im = incremental medium, g = gradual, ra = frequent reoccurring}
  \label{tab:results-on-streams-kappa}
\end{table}

Summarizing, the advantages of the RRSLVQ are the stability during concept drift and the constant model size and time. This makes the processing of stream data on a limited technical device feasible. The prototypes are providing an interpretable model, which is a lacking feature of the competitive algorithms. The prediction performance is comparable with OZA and HAT.

\section{Conclusion}
The proposed method is a major improvement to the original RSLVQ. Especially in streams with high rates of drift, the RRSLVQ\footnote{Source code available at https://github.com/ChristophRaab/rrslvq} shows remarkable stability over time.

Kswin seems to detect occurring changes in stream data and supports the concept drift handling process with good indicators at a given time.
Based on the experimental setting, the Kswin algorithm is the best at detecting drift at the given streaming data.
Further, the adaptation strategy paired with the momentum-based gradient descent is useful to minimize performance losses during the drift.

Compared to other stream approaches, the RRSLVQ provides a straightforward and interpretable model. The memory and time complexity is easy to bound and well-suited for embedding systems. The prediction performance is comparable with OzaBaggingAdwin and Hoeffding Adaptive Tree.

Future work should tackle dimension-wise testing at every time step to avoid unneeded tests. Besides, Kswin should be combined with other classifiers and false positives should be inspected. Although that Kswin already outperforms standard detectors, the task of concept drift detection must still be improved.

\section*{Acknowledgement}
We are thankful for support in the FuE program Informations- und Kommunikationstechnik of the StMWi, project OBerA, grant number IUK-1709-0011// IUK530/010.

\appendix
\section{Dataset Description}\label{sec:dataset_description}
In this appendix, the data streams are described. It is split up in synthetic data streams and real-world streams.
\subsection{Synthetic Stream Generators}
The synthetic stream generators are potentially infinite and drift can be implemented by changing the generating function. A common approach to create drift is to invert the function. The length of the streams is set to one millions time steps.
\textbf{SEA}
The SEA generator is an implementation of the data stream with abrupt concept drift, first described by Street and Kim in \cite{Street2001SEAGenerator}. It produces data streams with three continuous attributes $(f_1, f_2, f_3 )$. The range of values that each attribute can assume lies between 0 and 10. Only the first two attributes $(f_1, f_2)$ are relevant, i.e.\, $f_3$ does not influence the class value determination. New instances are obtained through randomly setting a point in a two-dimensional space, such that these dimensions correspond to $f_1$ and $f_2 $. This two-dimensional space is split into four blocks, each of which corresponds to one of four different functions. In each block, a point belongs to class 1 if $f_1 + f_2 \leq \theta$ and to class 0 otherwise. The threshold $\theta$ is used to split instances between class 0 and 1 assumes the values 8 (block 1), 9 (block 2), 7 (block 3), and 9.5 (block 4). Two important features are the possibility to balance classes, which means the class distribution will tend to a uniform one, and the possibility to add noise, which will, according to some probability, change the chosen label for an instance. In this experiment, the SEA generator is used with 10 \% noise in the data stream. $\mathrm{SEA}_g$ simulates one gradual drift, while $\mathrm{SEA}_a$ simulates an abrupt drift.


\textbf{MIXED} The MIXED Generator creates a binary classification stream. The stream consists of four features, two Boolean attributes $v$, $w$ and two numeric attributes $x$ and $y$ between $[0; 1]$. The label is positive if two out of three conditions are satisfied
\begin{equation}
    v = true , t=true , y <  0.5 + 0.3 \sin{(3 \pi x)}.
\end{equation}
After each concept drift, the classification is reversed \cite{Gama2004}.

\textbf{RTG}
The Random Tree Generator (RTG) is based on a random tree that splits features at random and sets labels to its leafs \cite{Domingos.2000}. After the tree is built, new instances are obtained through the assignment of uniformly distributed random values to each attribute. The leaf reached after a traverse of the tree, according to the attribute values of an instance, determines its class value. RTG allows customizing the number of nominal and numeric attributes, as well as the number of classes. In our experiments, we did not simulate drifts for the RTG data set. Since the concepts are generated and classified according to a tree structure, in theory, it should favor decision tree learners.

\textbf{RBF}
This generator produces data sets by means of the Radial Basis Function (RBF) \cite{skmultiflow}. This generator creates several centroids, having a random central position and associates them with a standard deviation value, a weight, and a class label. To create new instances, one centroid is selected at random, where centroids with higher weights have more chances to be selected. The new instance input values are set according to a random direction chosen to offset the centroid. The extent of the displacement is randomly drawn from a Gaussian distribution according to the standard deviation associated with the given centroid. Incremental drift is introduced by moving centroids at a continuous rate, effectively causing new instances that ought to belong to one centroid to another with (maybe) a different class. Both $\mathrm{RBF}_m$ and $\mathrm{RBF_f}$ were parametrized with 50 centroids, and all of them drift. $\mathrm{RBF}_m$ simulates a moderate incremental drift (speed of change set to 0.0001) while $\mathrm{RBF}_f$ simulates a faster incremental drift (speed of change set to 0.001).

\textbf{HYPER}
The HYPER data set simulates an incremental drift, and it was generated based on the hyperplane generator \cite{Hulten2001Hyperplane}. A hyperplane is a flat, $n - 1$ dimensional subset of that space that divides it into two disconnected parts. It is possible to change a hyperplane orientation and position by slightly changing its relative size of the weights $w_i$. This generator can be used to simulate time-changing concepts by varying the values of its weights as the stream progresses \cite{Bifet2018MOA}. HYPER was parametrized with ten attributes and magnitude of change of 0.001. Also, a 10 \% noise was added.

\subsection{Real-world Streams}
The real-world streams are finite in length and concept drift from a statistical standpoint is unknown. However, some streams have scenarios, which change by nature like COVTYPE.

\textbf{GMSC}
The Give Me Some Credit (GMSC) data set\footnote{https://www.kaggle.com/c/GiveMeSomeCredit} is a credit scoring data set where the objective is to decide whether a loan should be allowed or not. This decision is essential for banks since erroneous loans lead to the risk of default and unnecessary expenses on future lawsuits. The data set contains historical data on 150,000 borrowers, each described by ten attributes.

\textbf{Electricity} The Electricity data set\footnote{https://www.openml.org/d/151} was collected from the Australian New South Wales Electricity Market, where prices are not fixed. These prices are affected by the demand and supply of the market itself and set every five minutes. The Electricity data set contains 45,312 instances, where class labels identify the changes in the price (two possible classes: up or down) relative to a moving average of the last 24 hours. An important aspect of this data set is that it exhibits temporal dependencies \cite{Gama2004}.

\textbf{Poker-Hand} The Poker-Hand data set consists of 1,000,000 instances and eleven attributes. Each record of the Poker-Hand data set is an example of a hand consisting of five playing cards drawn from a standard deck of 52. Each card is described using two attributes (suit and rank), for a total of ten predictive attributes. There is one class attribute that describes the \textit{Poker Hand}.
This data set has no drift in its original form since the poker hand definitions do not change, and the instances are randomly generated. Thus, the version presented in \cite{Bifet.2013} is used, in which virtual drift is introduced via sorting the instances by rank and
suit. Duplicate hands were removed.

\textbf{Forest Cover Type} Forest Cover Type (COVTYPE) is a data set, which is often used to benchmark stream mining algorithms \cite{Bifet.2013, Oza2001}. The dataset assigns cartographic variables like elevation, soil type, slope, etc. of 30  square meter cells to different forest cover types. It only includes forests with minimal human-caused disturbances, so that the contained cover types are mostly a result of ecological processes.

\textbf{Airlines} The task on the Airlines dataset is to predict whether a planned flight will be delayed or not. It contains the airport of departure and arrival, as well as the airline and time-related features. The delay is only represented as a binary attribute. The dataset is used in the form as the MOA framework \cite{Bifet2018MOA} provides it.

\textbf{SQRE} The Moving Squares dataset has been used in \cite{Losing2017a} and consists of four equally distant separated squared uniform distributions which are moving in a horizontal direction with constant speed. Whenever the leading square reaches a predefined boundary, the direction is inverted. Each square represents a different class. The added value of this dataset is the predefined time horizon of 120 examples before old instances may start to overlap current ones. Thus, the dataset should be useful for dynamic sliding window approaches, allowing testing whether the size is adjusted accordingly.




\bibliographystyle{elsarticle-num-names}
\bibliography{library}

\begin{thebibliography}{35}
\providecommand{\natexlab}[1]{#1}
\providecommand{\url}[1]{\texttt{#1}}
\providecommand{\urlprefix}{URL }
\expandafter\ifx\csname urlstyle\endcsname\relax
  \providecommand{\doi}[1]{doi:\discretionary{}{}{}#1}\else
  \providecommand{\doi}[1]{doi:\discretionary{}{}{}\begingroup
  \urlstyle{rm}\url{#1}\endgroup}\fi
\providecommand{\bibinfo}[2]{#2}

\bibitem[{Raab and Schleif(2019)}]{esann18_raab}
\bibinfo{author}{C.~Raab}, \bibinfo{author}{F.-M. Schleif},
  \bibinfo{title}{{Reactive Soft Prototype Computing for frequent reoccurring
  Concept Drift}}, in: \bibinfo{booktitle}{27th European Symposium on
  Artificial Neural Networks, {ESANN} 2019, Bruges, Belgium, April 24-26,
  2019}, \bibinfo{pages}{437--442}, \bibinfo{year}{2019}.

\bibitem[{Gama et~al.(2014)Gama, Zliobaite, Bifet, Pechenizkiy, and
  Bouchachia}]{Gama2014}
\bibinfo{author}{J.~Gama}, \bibinfo{author}{I.~Zliobaite},
  \bibinfo{author}{A.~Bifet}, \bibinfo{author}{M.~Pechenizkiy},
  \bibinfo{author}{A.~Bouchachia}, \bibinfo{title}{{A survey on concept drift
  adaptation}}, \bibinfo{journal}{ACM Computing Surveys}
  \bibinfo{volume}{46}~(\bibinfo{number}{4}) (\bibinfo{year}{2014})
  \bibinfo{pages}{1--37}.

\bibitem[{Seo et~al.(2003)Seo, Bode, and Obermayer}]{Seo2003}
\bibinfo{author}{S.~Seo}, \bibinfo{author}{M.~Bode},
  \bibinfo{author}{K.~Obermayer}, \bibinfo{title}{{Soft nearest prototype
  classification}}, \bibinfo{journal}{IEEE Transactions on Neural Networks}
  \bibinfo{volume}{14}~(\bibinfo{number}{2}) (\bibinfo{year}{2003})
  \bibinfo{pages}{390--398}.

\bibitem[{Straat et~al.(2018)Straat, Abadi, G{\"{o}}pfert, Hammer, and
  Biehl}]{Straat2018}
\bibinfo{author}{M.~Straat}, \bibinfo{author}{F.~Abadi},
  \bibinfo{author}{C.~G{\"{o}}pfert}, \bibinfo{author}{B.~Hammer},
  \bibinfo{author}{M.~Biehl}, \bibinfo{title}{Statistical Mechanics of On-Line
  Learning Under Concept Drift}, \bibinfo{journal}{Entropy}
  \bibinfo{volume}{20}~(\bibinfo{number}{10}) (\bibinfo{year}{2018})
  \bibinfo{pages}{775}.

\bibitem[{Bifet and Gavald{\`{a}}(2006)}]{Bifet2006}
\bibinfo{author}{A.~Bifet}, \bibinfo{author}{R.~Gavald{\`{a}}},
  \bibinfo{title}{{Kalman Filters and Adaptive Windows for Learning in Data
  Streams}}, in: \bibinfo{editor}{L.~Todorovski},
  \bibinfo{editor}{N.~Lavra{\v{c}}}, \bibinfo{editor}{K.~P. Jantke} (Eds.),
  \bibinfo{booktitle}{Discovery Science}, \bibinfo{publisher}{Springer Berlin
  Heidelberg}, \bibinfo{address}{Berlin, Heidelberg}, \bibinfo{pages}{29--40},
  \bibinfo{year}{2006}.

\bibitem[{dos Reis et~al.(2016)dos Reis, Flach, Matwin, and
  Batista}]{DosReis2016}
\bibinfo{author}{D.~M. dos Reis}, \bibinfo{author}{P.~Flach},
  \bibinfo{author}{S.~Matwin}, \bibinfo{author}{G.~Batista},
  \bibinfo{title}{{Fast Unsupervised Online Drift Detection Using Incremental
  Kolmogorov-Smirnov Test}}, \bibinfo{journal}{Proceedings of the 22nd ACM
  SIGKDD International Conference on Knowledge Discovery and Data Mining - KDD
  '16} \bibinfo{volume}{22}~(\bibinfo{number}{1}) (\bibinfo{year}{2016})
  \bibinfo{pages}{1545--1554}.

\bibitem[{Salperwyck et~al.(2015)Salperwyck, Boull{\'{e}}, and
  Lemaire}]{Salperwyck2015}
\bibinfo{author}{C.~Salperwyck}, \bibinfo{author}{M.~Boull{\'{e}}},
  \bibinfo{author}{V.~Lemaire}, \bibinfo{title}{{Concept drift detection using
  supervised bivariate grids}}, \bibinfo{journal}{Proceedings of the
  International Joint Conference on Neural Networks}
  \bibinfo{volume}{2015-September}.

\bibitem[{Losing et~al.(2017{\natexlab{a}})Losing, Hammer, and
  Wersing}]{Losing2017a}
\bibinfo{author}{V.~Losing}, \bibinfo{author}{B.~Hammer},
  \bibinfo{author}{H.~Wersing}, \bibinfo{title}{{KNN classifier with self
  adjusting memory for heterogeneous concept drift}},
  \bibinfo{journal}{Proceedings - IEEE International Conference on Data Mining,
  ICDM} \bibinfo{volume}{1} (\bibinfo{year}{2017}{\natexlab{a}})
  \bibinfo{pages}{291--300}.

\bibitem[{Climer and Mendenhall(2016)}]{Climer2016a}
\bibinfo{author}{J.~Climer}, \bibinfo{author}{M.~J. Mendenhall},
  \bibinfo{title}{{Dynamic Prototype Addition in Generalized Learning Vector
  Quantization}}, \bibinfo{journal}{Advances in Self-Organizing Maps and
  Learning Vector Quantization} \bibinfo{volume}{428} (\bibinfo{year}{2016})
  \bibinfo{pages}{355--368}.

\bibitem[{Nova and Est{\'{e}}vez(2014)}]{Nova2014}
\bibinfo{author}{D.~Nova}, \bibinfo{author}{P.~A. Est{\'{e}}vez},
  \bibinfo{title}{{A review of learning vector quantization classifiers}},
  \bibinfo{journal}{Neural Computing and Applications}
  \bibinfo{volume}{25}~(\bibinfo{number}{3-4}) (\bibinfo{year}{2014})
  \bibinfo{pages}{511--524}.

\bibitem[{Sato and Yamada(1995)}]{Sato1995}
\bibinfo{author}{A.~Sato}, \bibinfo{author}{K.~Yamada},
  \bibinfo{title}{{Generalized Learning Vector Quantization}},
  \bibinfo{journal}{Advances in Neural Information Processing Systems, NIPS}
  ~(\bibinfo{number}{8}) (\bibinfo{year}{1995}) \bibinfo{pages}{423--429}.

\bibitem[{Villmann et~al.(2017)Villmann, Bohnsack, and Kaden}]{Villmann2017}
\bibinfo{author}{T.~Villmann}, \bibinfo{author}{A.~Bohnsack},
  \bibinfo{author}{M.~Kaden}, \bibinfo{title}{{Can learning vector quantization
  be an alternative to SVM and deep learning? - Recent trends and advanced
  variants of learning vector quantization for classification learning}},
  \bibinfo{journal}{Journal of Artificial Intelligence and Soft Computing
  Research} \bibinfo{volume}{7}~(\bibinfo{number}{1}) (\bibinfo{year}{2017})
  \bibinfo{pages}{65--81}.

\bibitem[{Zell et~al.(1993)Zell, Mamier, H{\"{u}}bner, Schmalzl, Sommer, and
  Vogt}]{Zell1993}
\bibinfo{author}{A.~Zell}, \bibinfo{author}{G.~Mamier},
  \bibinfo{author}{R.~H{\"{u}}bner}, \bibinfo{author}{N.~Schmalzl},
  \bibinfo{author}{T.~Sommer}, \bibinfo{author}{M.~Vogt},
  \bibinfo{title}{{SNNS: An Efficient Simulator for Neural Nets}}, in:
  \bibinfo{booktitle}{MASCOTS '93, Proceedings of the International Workshop on
  Modeling, Analysis, and Simulation On Computer and Telecommunication Systems,
  January 17-20, 1993, La Jolla, San Diego, CA, USA},
  \bibinfo{pages}{343--346}, \bibinfo{year}{1993}.

\bibitem[{Losing et~al.(2015)Losing, Hammer, and Wersing}]{Losing2015}
\bibinfo{author}{V.~Losing}, \bibinfo{author}{B.~Hammer},
  \bibinfo{author}{H.~Wersing}, \bibinfo{title}{{Interactive online learning
  for obstacle classification on a mobile robot}},
  \bibinfo{journal}{Proceedings of the International Joint Conference on Neural
  Networks} \bibinfo{volume}{2015-September}~(\bibinfo{number}{2}).

\bibitem[{Bifet et~al.(2015)Bifet, Read, and Holmes}]{Bifet2015}
\bibinfo{author}{A.~Bifet}, \bibinfo{author}{J.~Read},
  \bibinfo{author}{G.~Holmes}, \bibinfo{title}{{Efficient Online Evaluation of
  Big Data Stream Classifiers Categories and Subject Descriptors}},
  \bibinfo{journal}{Proceedings of the 21th ACM SIGKDD International Conference
  on Knowledge Discovery and Data Mining}  (\bibinfo{year}{2015})
  \bibinfo{pages}{59--68}.

\bibitem[{Gomes et~al.(2017)Gomes, Bifet, Read, Barddal, Enembreck, Pfharinger,
  Holmes, and Abdessalem}]{Gomes2017}
\bibinfo{author}{H.~M. Gomes}, \bibinfo{author}{A.~Bifet},
  \bibinfo{author}{J.~Read}, \bibinfo{author}{J.~P. Barddal},
  \bibinfo{author}{F.~Enembreck}, \bibinfo{author}{B.~Pfharinger},
  \bibinfo{author}{G.~Holmes}, \bibinfo{author}{T.~Abdessalem},
  \bibinfo{title}{{Adaptive random forests for evolving data stream
  classification}}, \bibinfo{journal}{Machine Learning}
  \bibinfo{volume}{106}~(\bibinfo{number}{9}) (\bibinfo{year}{2017})
  \bibinfo{pages}{1469--1495}.

\bibitem[{Gama et~al.(2004{\natexlab{a}})Gama, Medas, Castillo, and
  Rodrigues}]{Gama.04}
\bibinfo{author}{J.~Gama}, \bibinfo{author}{P.~Medas},
  \bibinfo{author}{G.~Castillo}, \bibinfo{author}{P.~Rodrigues},
  \bibinfo{title}{{Learning with Drift Detection}}, in:
  \bibinfo{editor}{A.~L.~C. Bazzan}, \bibinfo{editor}{S.~Labidi} (Eds.),
  \bibinfo{booktitle}{Advances in Artificial Intelligence -- SBIA 2004},
  \bibinfo{publisher}{Springer Berlin Heidelberg}, \bibinfo{address}{Berlin,
  Heidelberg}, \bibinfo{pages}{286--295}, \bibinfo{year}{2004}{\natexlab{a}}.

\bibitem[{Wankhade et~al.(2013)Wankhade, Hasan, and Thool}]{Wankhade.2013}
\bibinfo{author}{K.~Wankhade}, \bibinfo{author}{T.~Hasan},
  \bibinfo{author}{R.~Thool}, \bibinfo{title}{{A Survey: Approaches for
  Handling Evolving Data Streams}}, in: \bibinfo{booktitle}{2013 International
  Conference on Communication Systems and Network Technologies},
  \bibinfo{pages}{621--625}, \bibinfo{year}{2013}.

\bibitem[{Lopes(2011)}]{Lopes2011}
\bibinfo{author}{R.~H.~C. Lopes}, \bibinfo{title}{Kolmogorov-Smirnov Test},
  \bibinfo{publisher}{Springer Berlin Heidelberg}, \bibinfo{address}{Berlin,
  Heidelberg}, ISBN \bibinfo{isbn}{978-3-642-04898-2},
  \bibinfo{pages}{718--720}, \bibinfo{year}{2011}.

\bibitem[{Abdi(2007)}]{Abdi2007}
\bibinfo{author}{H.~Abdi}, \bibinfo{title}{{The Bonferonni and
  {\v{S}}id{\'{a}}k Corrections for Multiple Comparisons}},
  \bibinfo{journal}{Encyclopedia of Measurement and Statistics}
  (\bibinfo{year}{2007}) \bibinfo{pages}{103--107}.

\bibitem[{Heusinger et~al.(2020)Heusinger, Raab, and Schleif}]{Heusinger}
\bibinfo{author}{M.~Heusinger}, \bibinfo{author}{C.~Raab},
  \bibinfo{author}{F.-M. Schleif}, \bibinfo{title}{{Passive Concept Drift
  Handling via Momentum Based Robust Soft Learning Vector Quantization}}, in:
  \bibinfo{editor}{A.~Vellido}, \bibinfo{editor}{K.~Gibert},
  \bibinfo{editor}{C.~Angulo}, \bibinfo{editor}{J.~D. {Mart{\'{i}}n Guerrero}}
  (Eds.), \bibinfo{booktitle}{Advances in Self-Organizing Maps, Learning Vector
  Quantization, Clustering and Data Visualization},
  \bibinfo{publisher}{Springer International Publishing},
  \bibinfo{address}{Cham}, \bibinfo{pages}{200--209}, \bibinfo{year}{2020}.

\bibitem[{Biehl et~al.(2019)Biehl, Abadi, G{\"{o}}pfert, and Hammer}]{Biehl.19}
\bibinfo{author}{M.~Biehl}, \bibinfo{author}{F.~Abadi},
  \bibinfo{author}{C.~G{\"{o}}pfert}, \bibinfo{author}{B.~Hammer},
  \bibinfo{title}{{Prototype-Based Classifiers in the Presence of Concept
  Drift: A Modelling Framework}}, in: \bibinfo{booktitle}{Advances in
  Self-Organizing Maps, Learning Vector Quantization, Clustering and Data
  Visualization - Proceedings of the 13th International Workshop, {WSOM+} 2019,
  Barcelona, Spain, June 26-28, 2019}, \bibinfo{pages}{210--221},
  \bibinfo{year}{2019}.

\bibitem[{Montiel et~al.(2018)Montiel, Read, Bifet, and
  Abdessalem}]{skmultiflow}
\bibinfo{author}{J.~Montiel}, \bibinfo{author}{J.~Read},
  \bibinfo{author}{A.~Bifet}, \bibinfo{author}{T.~Abdessalem},
  \bibinfo{title}{{Scikit-Multiflow: A Multi-output Streaming Framework}},
  \bibinfo{journal}{Journal of Machine Learning Research} \bibinfo{volume}{1}
  (\bibinfo{year}{2018}) \bibinfo{pages}{1--5}.

\bibitem[{Cornu{\'{e}}jols(2010)}]{Cornuejols2010}
\bibinfo{author}{A.~Cornu{\'{e}}jols}, \bibinfo{title}{{On-Line Learning: Where
  Are We So Far?}}, in: \bibinfo{booktitle}{Lecture Notes in Computer Science
  (including subseries Lecture Notes in Artificial Intelligence and Lecture
  Notes in Bioinformatics)}, vol. \bibinfo{volume}{6202 LNAI},
  \bibinfo{publisher}{Springer}, \bibinfo{pages}{129--147},
  \bibinfo{year}{2010}.

\bibitem[{Baena-Garc{\'{i}}a et~al.(2006)Baena-Garc{\'{i}}a, {Del
  Campo-{\'{A}}vila}, Fidalgo, Bifet, Gavald{\`{a}}, and
  Morales-Bueno}]{Baena006}
\bibinfo{author}{M.~Baena-Garc{\'{i}}a}, \bibinfo{author}{J.~{Del
  Campo-{\'{A}}vila}}, \bibinfo{author}{R.~Fidalgo},
  \bibinfo{author}{A.~Bifet}, \bibinfo{author}{R.~Gavald{\`{a}}},
  \bibinfo{author}{R.~Morales-Bueno}, \bibinfo{title}{{Early Drift Detection
  Method}}, \bibinfo{journal}{Fourth International Workshop on Knowledge
  Discovery from Data Streams} \bibinfo{volume}{6} (\bibinfo{year}{2006})
  \bibinfo{pages}{77--86}.

\bibitem[{Gama et~al.(2004{\natexlab{b}})Gama, Medas, Castillo, and
  Rodrigues}]{Gama2004}
\bibinfo{author}{J.~Gama}, \bibinfo{author}{P.~Medas},
  \bibinfo{author}{G.~Castillo}, \bibinfo{author}{P.~Rodrigues},
  \bibinfo{title}{{Learning with Drift Detection}}, in:
  \bibinfo{editor}{A.~L.~C. Bazzan}, \bibinfo{editor}{S.~Labidi} (Eds.),
  \bibinfo{booktitle}{Advances in Artificial Intelligence -- SBIA 2004},
  \bibinfo{publisher}{Springer Berlin Heidelberg}, \bibinfo{address}{Berlin,
  Heidelberg}, \bibinfo{pages}{286--295}, \bibinfo{year}{2004}{\natexlab{b}}.

\bibitem[{Bifet and Gavald{\`{a}}(2009)}]{Bifet2009}
\bibinfo{author}{A.~Bifet}, \bibinfo{author}{R.~Gavald{\`{a}}},
  \bibinfo{title}{{Adaptive Learning from Evolving Data Streams}}, in:
  \bibinfo{booktitle}{Advances in Intelligent Data Analysis VIII},
  \bibinfo{publisher}{Springer Berlin Heidelberg}, \bibinfo{address}{Berlin,
  Heidelberg}, \bibinfo{pages}{249--260}, \bibinfo{year}{2009}.

\bibitem[{{Oza, Nikunj, Russell} and Oza(2005)}]{DBLP:conf/smc/Oza05}
\bibinfo{author}{S.~{Oza, Nikunj, Russell}}, \bibinfo{author}{N.~Oza},
  \bibinfo{title}{{Online bagging and boosting}}, in:
  \bibinfo{booktitle}{Proceedings of the IEEE International Conference on
  Systems, Man and Cybernetics, Waikoloa, Hawaii, USA, October 10-12, 2005},
  vol.~\bibinfo{volume}{3}, \bibinfo{publisher}{IEEE},
  \bibinfo{pages}{2340--2345}, \bibinfo{year}{2005}.

\bibitem[{Losing et~al.(2017{\natexlab{b}})Losing, Hammer, and
  Wersing}]{Losing2017}
\bibinfo{author}{V.~Losing}, \bibinfo{author}{B.~Hammer},
  \bibinfo{author}{H.~Wersing}, \bibinfo{title}{{Self-adjusting memory: How to
  deal with diverse drift types}}, \bibinfo{journal}{IJCAI International Joint
  Conference on Artificial Intelligence}  (\bibinfo{year}{2017}{\natexlab{b}})
  \bibinfo{pages}{4899--4903}.

\bibitem[{Street and Kim(2001)}]{Street2001SEAGenerator}
\bibinfo{author}{W.~N. Street}, \bibinfo{author}{Y.~Kim}, \bibinfo{title}{A
  Streaming Ensemble Algorithm (SEA) for Large-scale Classification}, in:
  \bibinfo{booktitle}{Proceedings of the Seventh ACM SIGKDD International
  Conference on Knowledge Discovery and Data Mining}, KDD '01,
  \bibinfo{publisher}{ACM}, \bibinfo{address}{New York, NY, USA},
  \bibinfo{pages}{377--382}, \bibinfo{year}{2001}.

\bibitem[{Domingos and Hulten(2000)}]{Domingos.2000}
\bibinfo{author}{P.~M. Domingos}, \bibinfo{author}{G.~Hulten},
  \bibinfo{title}{{Mining high-speed data streams}}, in:
  \bibinfo{booktitle}{Proceedings of the sixth ACM SIGKDD international
  conference on Knowledge discovery and data mining, Boston, MA, USA, August
  20-23}, \bibinfo{pages}{71--80}, \bibinfo{year}{2000}.

\bibitem[{Hulten et~al.(2001)Hulten, Spencer, and
  Domingos}]{Hulten2001Hyperplane}
\bibinfo{author}{G.~Hulten}, \bibinfo{author}{L.~Spencer},
  \bibinfo{author}{P.~Domingos}, \bibinfo{title}{Mining Time-changing Data
  Streams}, in: \bibinfo{booktitle}{Proceedings of the Seventh ACM SIGKDD
  International Conference on Knowledge Discovery and Data Mining}, KDD '01,
  \bibinfo{publisher}{ACM}, \bibinfo{address}{New York, NY, USA},
  \bibinfo{pages}{97--106}, \bibinfo{year}{2001}.

\bibitem[{Bifet et~al.(2018)Bifet, Gavald\`a, Holmes, and
  Pfahringer}]{Bifet2018MOA}
\bibinfo{author}{A.~Bifet}, \bibinfo{author}{R.~Gavald\`a},
  \bibinfo{author}{G.~Holmes}, \bibinfo{author}{B.~Pfahringer},
  \bibinfo{title}{Machine Learning for Data Streams with Practical Examples in
  MOA}, \bibinfo{publisher}{MIT Press}, \bibinfo{year}{2018}.

\bibitem[{Bifet et~al.(2013)Bifet, Pfahringer, Read, and Holmes}]{Bifet.2013}
\bibinfo{author}{A.~Bifet}, \bibinfo{author}{B.~Pfahringer},
  \bibinfo{author}{J.~Read}, \bibinfo{author}{G.~Holmes},
  \bibinfo{title}{{Efficient Data Stream Classification via Probabilistic
  Adaptive Windows}}, in: \bibinfo{booktitle}{Proceedings of the 28th Annual
  ACM Symposium on Applied Computing}, SAC '13, \bibinfo{publisher}{ACM},
  \bibinfo{address}{New York, NY, USA}, \bibinfo{pages}{801--806},
  \bibinfo{year}{2013}.

\bibitem[{Oza and Russell(2001)}]{Oza2001}
\bibinfo{author}{N.~C. Oza}, \bibinfo{author}{S.~Russell},
  \bibinfo{title}{{Experimental comparisons of online and batch versions of
  bagging and boosting}}, \bibinfo{journal}{Proceedings of the seventh ACM
  SIGKDD international conference on Knowledge discovery and data mining - KDD
  '01}  (\bibinfo{year}{2001}) \bibinfo{pages}{359--364}.

\end{thebibliography}

\end{document}